\DeclareFontFamily{T1}{lmsr}{}
\DeclareFontShape{T1}{lmsr}{m}{n}{<->ec-lmr5}{}
\newcommand*{\srmfamily}{\fontfamily{lmsr}\selectfont}
\DeclareTextFontCommand{\textsrm}{\srmfamily}
\newtheorem{theorem}{Theorem}
\newtheorem{lemma}{Lemma}
\newtheorem{definition}[theorem]{Definition}
\newtheorem{remark}[theorem]{Remark}
\newtheorem{corollary}[theorem]{Corollary}
\def \R {\mathbb{R}}
\def \D {\mathcal{D}}
\def \E {\mathrm{E}}
\def \x {\mathbf{x}}
\def \L {\mathcal{L}}
\def \H {\mathcal{H}}
\def \v {\mathbf{v}}
\def \S {\mathcal{S}}
\def \z {\mathbf{z}}
\def \v {\mathbf{v}}
\def \w {\mathbf{w}}
\def \R {\mathbb{R}}
\def \A {\mathcal{A}}
\def \F {\mathcal{F}}
\def \wh {\widehat{\w}}
\def \lb {\L}
\def \Lh {\widehat{\L}}
\def \clip {\mbox{clip}}
\def \Xt {\widetilde{X}}
\def \A {\mathcal{A}}
\newcommand{\dd}[2] { \langle {#1}, {#2} \rangle}
\newcommand{\mc}[1] {\mathcal{#1}}
\def \eo {\epsilon_{\text{opt}}}
\def \ep {\epsilon_{\text{prior}}}
\def \w {\mathbf{w}}
\def \x {\mathbf{x}}
\def \R {\mathbb{R}}
\def \E {\mathbb{E}}
\def \z {\mathbf{z}}
\def \g {\hat{\mathbf{g}}}
\def \O {\mathcal{O}}
    \def\_email#1@#2\q_nil{%
      \href{mailto:#1@#2}{{\emailfont #1\emailampersat #2}}
    }
    \newcommand\emailfont{\sffamily}
    \newcommand\emailampersat{{\color{red}\small@}}
\begin{document}

\title{Passive Learning with Target Risk}

\date{}

\author{
    Mehrdad Mahdavi\\
    \small{Department of Computer Science}\\
    \small{Michigan State University}\\
    \small{\texttt{mahdavim@cse.msu.edu}}
  \and
    Rong Jin\\
    \small{Department of Computer Science}\\
    \small{Michigan State University}\\
    \small{\texttt{rongjin@cse.msu.edu}}
}

\maketitle
\begin{abstract}
In this paper we consider learning in passive setting but with a slight modification. We assume that the target expected loss, also referred to as target risk, is provided in advance for  learner as  prior knowledge. Unlike most studies in the learning theory  that only incorporate the prior knowledge into the generalization bounds, we are able to explicitly utilize the target risk in the learning process. Our analysis reveals a surprising result on the sample complexity of learning: by exploiting the target risk in the learning algorithm,  we show that when the loss function is both strongly convex and smooth, the sample complexity reduces to $\O(\log \left(\frac{1}{\epsilon}\right))$, an exponential improvement compared to the sample complexity $\O(\frac{1}{\epsilon})$ for learning with strongly convex loss functions.  Furthermore,  our proof is constructive and is based on a computationally efficient stochastic optimization algorithm  for such settings which demonstrate that the  proposed algorithm is practically useful.
\end{abstract}

\section{Introduction}
In the standard passive supervised  learning setting, the learning algorithm is given a set of labeled examples 
$\S = \left( (\x_1, y_1), \cdots, (\x_n, y_n) \right)$ drawn i.i.d. from a fixed but unknown  distribution $\D$. 
The goal, with the help of labeled examples, is to output a classifier $h$ from a predefined hypothesis class $\H$ 
that does well on unseen examples coming from the same distribution. The sample complexity of an algorithm is 
the number of examples which is sufficient to ensure that, with probability at least $1-\delta$ (w.r.t. the random choice of $\mc{S}$), the algorithm picks a hypothesis 
with an error that is at most $\epsilon$ from the optimal one. Sample complexity of passive learning is well established  and goes back to early works in the learning theory where the lower bounds  $\Omega\left(\frac{1}{\epsilon} ( \log \frac{1}{\epsilon}+ \log \frac{1}{\delta})\right)$ 
and $\Omega\left(\frac{1}{\epsilon^2} ( \log \frac{1}{\epsilon}+ \log \frac{1}{\delta})\right)$ were obtained  
in classic PAC and general agnostic PAC settings, respectively
~\cite{ehrenfeucht1989general,learnabilityvcdim89,anthony1999neural}.


In light of no free lunch theorem, learning is impossible unless we make assumptions regarding the nature of the problem  at hand. Therefore, when approaching a particular learning problem,  it is desirable to take into account some prior knowledge we might have about our problem and use a specialized algorithm that exploits this knowledge into a learning process or theoretical analysis. A key issue in this regard is the formalization of prior knowledge.  Such prior knowledge can be expressed by restricting our hypothesis class, making assumptions on the nature of unknown distribution $\D$ or formalization of the data space, analytical properties of the loss function being used to evaluate the performance, sparsity, and margin-- to name a few.

There has been an upsurge of interest  over the last decade in finding tight upper bounds on the sample complexity  by utilizing prior knowledge on the  analytical properties of the loss function,   that led to stronger generalization bounds  in agnostic PAC setting. In~\cite{DBLP:journals/tit/LeeBW98}  {\it fast} rates obtained for squared loss, exploiting the strong convexity of this loss function, which only holds under pseudo-dimensionality assumption.  With the recent development in online strongly convex optimization~\cite{hazanlog2006}, fast rates approaching $\O(\frac{1}{\epsilon} \log \frac{1}{\delta}) $ for convex Lipschitz strongly convex loss functions has been obtained in~\cite{fastrates2008,compl-linear-nips-2008}. For smooth non-negative loss functions,~\cite{srebro-2010-smoothness} improved the sample complexity  to {\it optimistic} rates
\[\O\left (\frac{1}{\epsilon}\left(\frac{\eo+\epsilon}{\epsilon} \right) \left( \log^3 \frac{1}{\epsilon}+ \log \frac{1}{\delta} \right)\right)\] 
for non-parametric learning using the notion of local Rademacher complexity~\cite{bartlett2005local}, where $\eo$ is the optimal risk.

In this work, we consider a slightly different setup for passive learning. We assume that before the start of the learning process, the learner has in mind a {\it target expected loss}, also referred to as {\it target risk},  denoted by $\ep$\footnote{We use $\ep$ instead of $\epsilon$ to emphasize the fact that this parameter is known to the learner in advance.}, and tries to learn a classifier with the expected risk of $O(\ep)$ by labeling a small number of training examples.  We further assume the target risk $\ep$ is feasible, i.e.,  $\ep \geq \eo$. To address this problem, we develop an efficient algorithm, based on stochastic optimization, for passive learning with target risk. The most surprising property of the proposed algorithm is that when the loss function is both smooth and strongly convex, it only needs $\O(d\log ({1}/{\ep}))$ labeled examples to find a classifier with the expected risk of $O(\ep)$, where $d$ is the dimension of data. This is a significant improvement compared to the sample complexity for empirical risk minimization. 


The key intuition behind our algorithm is that by knowing target risk as  prior knowledge, the learner has  better control over the variance in stochastic gradients, which contributes mostly to the slow convergence in stochastic optimization and consequentially large sample complexity in passive learning. The trick is to run  the stochastic optimization in multistages with a {\it fixed} size  and decrease the variance of stochastically perturbed gradients at each iteration by a properly designed mechanism.  Another crucial feature of the proposed algorithm is  to utilize the target risk $\ep$ to gradually refine the hypothesis space as the algorithm proceeds. Our algorithm differs significantly  from standard stochastic optimization algorithms and is able to achieve a geometric convergence rate with the knowledge of target risk $\ep$.

We note that our work does not contradict the lower bound in~\cite{srebro-2010-smoothness} because a {\it feasible} target risk $\ep$ is given in our learning setup and is fully exploited by the proposed algorithm. Knowing that the target risk $\ep$ is feasible makes it possible to improve the sample complexity from $\O({1}/{\ep})$ to $\O(\log({1}/{\ep}))$. We also note that although the logarithmic sample complexity is known for active learning~\cite{hanneke-thesis,mariatruesample2010}, we are unaware of any existing passive learning algorithm that is able to achieve a logarithmic sample complexity by incorporating any kind of prior knowledge.
\subsection{More Related Work}
\paragraph{Stochastic Optimization and Learnability}{Our work is related to the recent studies that examined the learnability from the viewpoint of stochastic convex optimization. In~\cite{sridharan-2012-learning,shalev-shwartz:2010:learnability}, the authors presented learning problems that are learnable by stochastic convex optimization but not by empirical risk minimization (ERM). Our work follows this line of research. The proposed algorithm achieves the sample complexity of $O(d\log(1/\ep))$ by explicitly incorporating the target expected risk $\ep$ into the stochastic convex optimization algorithm. It is however difficult to incorporate such knowledge into the framework of ERM. Furthermore, it is worth noting that in~\cite{ramdas-2013-optimal,sridharan-2012-learning,rakhlin-2010-online,Ben-DavidPS09}, the authors explored the connection between online optimization and statistical learning in the opposite direction. This was done by exploring the complexity measures developed in statistical learning for the learnability of online learning.}

\paragraph{Online and Stochastic Optimization}{The proposed algorithm is closely related to the recent works that stated  $O(1/n)$ is the optimal convergence rate for stochastic optimization when the objective function is strongly convex~\cite{primal-dual-nemirovsky,hazan-2011-beyond,sgd-suffic-icml2012}. In contrast, the proposed algorithm is able to achieve a geometric convergence rate for a target optimization error. Similar to the previous argument, our result does not contradict the lower bound given in~\cite{hazan-2011-beyond} because of the knowledge of a feasible optimization error. Moreover, in contrast to the multistage algorithm  in~\cite{hazan-2011-beyond} where the size of stages increases exponentially,  in our algorithm, the size of each stage  is fixed to be a constant.

}
\paragraph{Outline}{ The remainder of the paper is organized as follows: In Section \ref{sec:problem}, we set up notation, describe the setting, and discuss the assumptions on which  our algorithm relies. Section~\ref{sec:stochastic} motivates the problem and discusses the main intuition of our algorithm. The proposed algorithm and main result are discussed in Section \ref{sec:algorithm}. We prove the main result in Section \ref{sec:analysis}. Section \ref{sec:conclusion} concludes the paper  and the appendix contains the omitted proofs.}
\section{Preliminaries}
\label{sec:problem}
As usual in the framework of statistical learning theory, we consider a domain   $\mc{Z} := \mc{X} \times \mc{Y}$  where $\mc{X} \subseteq \R^d$ is the space for instances and $\mc{Y}$ is
the set of labels, and $\mc{H}$ is a hypothesis class.  We assume that the domain space $\mc{Z}$ is endowed with an unknown  Borel probability measure  $\mc{D}$.  We measure the performance of a specific hypothesis $h$ by defining a nonnegative loss function $\ell: \mc{H} \times \mc{Z} \rightarrow \mathbb{R}_{+}$.  We denote  the  risk of a hypothesis $h$ by  $\lb(h) = \mathbb{E}_{\z \sim \mc{D}}  [\ell(h, \z)]$. Given a sample $\mc{S} = (\z_1, \cdots, \z_n) = ( (\x_1, y_1), \cdots, (\x_n, y_n) ) \sim \mc{D}^n$, the goal of a learning algorithm is to pick a hypothesis $h: \mc{X} \rightarrow \mc{Y}$ from  $\mc{H}$ in such a  way that its risk $\lb(h)$ is close to the minimum possible risk of a hypothesis in $\mc{H}$.

Throughout this paper we pursue stochastic optimization viewpoint for risk minimization as detailed in Section \ref{sec:stochastic}. Precisely,  we focus on the  convex learning problems for which we assume that the hypothesis class $\H$ is a parametrized convex set $\H = \{h_{\w}: \x \mapsto \langle \w, \x \rangle: \w \in \R^d,   \|\w\| \leq R\}$ and for all $\z = (\x, y) \in \mc{Z}$, the loss function $\ell(\cdot, \z)$ is a non-negative convex function.  Thus, in the remainder we simply use vector $\w$ to represent $h_{\w}$, rather than working with  hypothesis $h_{\w}$. We will assume throughout that $\mc{X} \subseteq \R^d$ is the unit ball so that $\|\x\| \leq 1$. 
 Finally, the conditions under which we can get the desired result on sample complexity depend on analytic properties of the loss function. In particular, we assume that the loss function is strongly convex and smooth~\cite{nesterov-book}.
\begin{definition} [Strong convexity]A loss function $\ell(\w)$ is said to be $\alpha$-strongly convex w.r.t a norm $\|\cdot\|$\footnote{Throughout this paper, we only consider the $\ell_2$-norm.}, if  there exists a constant $\alpha > 0$ (often called the modulus of strong convexity) such that, for any $\lambda\in[0, 1]$ and for all $\w_1,\w_2\in \H$, it holds  that
\begin{equation*}
\ell(\lambda\w_1+ (1-\lambda)\w_2)\leq \alpha \ell(\w_1) + (1-\lambda) \ell(\w_2) - \frac{1}{2}\lambda(1-\lambda)\alpha\|\w_1-\w_2\|^2.
\end{equation*}
\end{definition}
When $\ell(\w)$ is differentiable, the strong convexity is equivalent to
\[\ell(\w_1) \geq \ell(\w_2) + \langle \nabla \ell(\w_2), \w_1-\w_2\rangle + \frac{\alpha}{2}\|\w_1-\w_2\|^2,\; \forall \;\w_1,\w_2\in\H.\] We would like to emphasize that  in our setting, we only need  that the expected  loss function $\L(\w)$ be strongly convex, without having to assume strong convexity for individual loss functions.  \\
Another property  of loss function that underline our analysis is its smoothness. Smooth functions arise, for instance, in logistic and least-squares regression,  and in general for learning linear predictors where the loss function has a Lipschitz-continuous gradient. 
\begin{definition} [Smoothness]A differentiable  loss function $\ell(\w)$ is said to be  $\beta$-smooth  with respect to a norm $\|\cdot\|$, if it holds that
\begin{equation}
\label{eqn:smoth}
\ell(\w_1) \leq \ell(\w_2) +  \dd{\nabla\ell(\w_2)}{\w_1-\w_2} + \frac{\beta}{2}\|\w_1-\w_2\|^2, \;\forall \; \w_1,\w_2\in\H.
\end{equation}
\end{definition}

\section{The Curse of Stochastic Oracle}
\label{sec:stochastic}

We begin by discussing stochastic optimization  for risk minimization, convex learnability,  and  then the main intuition that motivates this work. 

Most existing learning algorithms follow the framework of empirical risk minimizer (ERM) or regularized ERM,  which was developed to great extent by Vapnik and Chervonenkis~\cite{vapnik1971uniform}.  Essentially, ERM methods use the empirical loss  over $\mc{S}$, i.e., $\Lh(\w) = \frac{1}{n} \sum_{i = 1}^{n}{\ell(\w, \z_i)}$, as a criterion to pick a hypothesis. In regularized ERM methods, the learner  picks a hypothesis that jointly minimizes $\Lh(\w) $ and a regularization function over $\w$.  We note that ERM resembles the widely used Sample Average Approximation (SAA) method in the optimization community  when the hypothesis space and the loss function are convex. If uniform convergence holds, then the empirical risk minimizer is consistent, i.e., the population risk of the ERM converges to the optimal population risk, and the problem is learnable using  ERM.

A rather different paradigm for risk minimization is stochastic optimization. Recall that the goal of learning is to  approximately minimize the risk  $\lb(\w) = \mathbb{E}_{\z \sim \D}[\ell(\w, \z)]$.  However,  since  the distribution $\D$ is unknown to the learner, we can not utilize standard gradient methods to minimize the expected loss.  Stochastic optimization methods  circumvent this problem by allowing the optimization method to take a step which is only in expectation along the negative of the gradient.  To motivate stochastic optimization as an alternative to the ERM method,~\cite{shalev2009stochastic,shalev2009learnability} challenged the ERM method and showed that there  is a real gap between learnability and uniform convergence by investigating non-trivial problems where no uniform convergence holds, but they are still learnable using Stochastic Gradient Descent (SGD) algorithm~\cite{nemirovski2009robust}. These results  uncovered an important relationship between learnability and stability, and showed that stability  together with approximate empirical risk minimization, assures learnability~\cite{shalev-shwartz:2010:learnability}. We note that Lipschitzness  or smoothness of loss function is necessary for an algorithm to be stable, and boundedness and convexity  alone are not sufficient for ensuring that the convex learning problem is learnable.

To directly solve $\min_{\w \in \mc{H}} \lb({\w}) = \E_{\z \sim \D}[\ell(\w, \z)]$, a typical stochastic  optimization algorithm initially picks some point in the feasible set $\mc{H}$   and  iteratively updates these points based on first order perturbed gradient information about the function at those points.  For instance, the widely used SGD algorithm  starts with $\w_0 = \mathbf{0}$; at each iteration $t$, it queries the stochastic oracle ($\mathcal{SO}$) at $\w_t$ to obtain a perturbed but unbiased gradient $\g_t$ and updates the current solution by
\[ \w_{t+1} = {\Pi}_{\mc{H}} \left(\w_t - \eta_t \g_t\right),\]
where $\Pi_{\H}(\w)$ projects the solution $\w$ into the domain $\H$. To capture the efficiency of optimization procedures in a general sense, one can use  oracle complexity of the algorithm which, roughly speaking,  is the minimum number of calls to any  oracle needed by any method to achieve desired accuracy~\cite{nesterov-book}. We note that the oracle complexity corresponds to the sample complexity of learning from the stochastic optimization viewpoint  previously discussed. The following theorem states a lower bound on the sample complexity of stochastic optimization algorithms~\cite{nemirovsky1983problem}.
\begin{theorem}[Lower Bound on Oracle Complexity] Suppose $\lb({\w}) = \E_{\z \sim \D}[\ell(\w, \z)]$ is  $\alpha$-strongly and $\beta$-smooth convex  function defined over convex domain $\mc{H}$. Let $\mathcal{SO}$ be a stochastic oracle that for any point $\w \in \mc{H}$ returns an unbiased estimate $\g$, i.e.,  $\E[\g] = \nabla \lb(\w)$, such that $\E\left[\|\g-\nabla \lb(\w)\|^2\right] \leq \sigma^2$ holds. Then for any stochastic optimization algorithm $\A$ to find a solution $\wh$ with $\epsilon$ accuracy respect to the optimal solution $\w_*$, i.e.,  $\E \left[ \lb(\wh) - \lb(\w_*) \right] \leq \epsilon$,  the number of calls  to $\mc{SO}$ is lower bounded by

\begin{eqnarray}
\mathcal{O}(1) \left( \sqrt{\frac{\beta}{\alpha}} \log \left( \frac{\beta \| \w_0 - \w_*\|^2}{\epsilon}\right) + \frac{\sigma^2}{\alpha \epsilon}\right).
\label{eqn:lower}
\end{eqnarray}
\label{thm:lower}
\end{theorem}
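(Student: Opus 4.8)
The plan is to prove the two summands in~\eqref{eqn:lower} as separate lower bounds and then combine them. Since any algorithm $\A$ achieving expected accuracy $\epsilon$ must simultaneously beat the optimization-theoretic barrier (which is present even for a \emph{noiseless} oracle) and the statistical barrier (which is present even for the simplest well-conditioned quadratic), the number of oracle calls $N$ must satisfy $N \geq \max\{T_{\mathrm{opt}}, T_{\mathrm{stat}}\} \geq \tfrac12(T_{\mathrm{opt}} + T_{\mathrm{stat}})$, which is exactly the claimed bound up to the hidden constant. First I would therefore isolate the two regimes and argue each of them on its own hard instance inside the class of $\alpha$-strongly convex, $\beta$-smooth functions.

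For the first term, $\sqrt{\beta/\alpha}\,\log(\beta\|\w_0-\w_*\|^2/\epsilon)$, I would set $\sigma = 0$ (so the variance constraint forces $\g = \nabla\lb(\w)$, an exact gradient) and invoke the classical worst-case first-order construction of Nesterov: a tridiagonal (``chain'') quadratic $\lb(\w) = \tfrac12\,\w^\top A\,\w - \dd{\e_1}{\w}$ whose Hessian $A$ is scaled so that its spectrum fills $[\alpha,\beta]$, giving condition number $\kappa = \beta/\alpha$. The key is the \emph{span argument}: starting from $\w_0 = \mathbf 0$, after $t$ queries every iterate produced by a first-order method lies in the Krylov subspace $\span\{\e_1, A\e_1, \dots, A^{t-1}\e_1\}$, so only its first $t$ coordinates can be nonzero, while the exact minimizer $\w_* = A^{-1}\e_1$ has geometrically decaying mass on all coordinates. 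Bounding the residual on the untouched coordinates shows the suboptimality cannot drop below a geometric factor $(1 - c/\sqrt{\kappa})^{t}$, which forces $t = \Omega(\sqrt{\kappa}\,\log(\beta\|\w_0-\w_*\|^2/\epsilon))$.

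For the second term, $\sigma^2/(\alpha\epsilon)$, I would fix the simplest instance $\lb_\theta(\w) = \tfrac{\alpha}{2}\|\w - \theta\|^2$ with unknown minimizer $\theta$ and a Gaussian stochastic oracle $\g = \nabla\lb_\theta(\w) + \xi$, $\xi \sim \N(\mathbf 0, (\sigma^2/d) I)$, so that $\E\|\g - \nabla\lb_\theta(\w)\|^2 \leq \sigma^2$ holds (and the function is trivially $\alpha$-strongly convex and $\alpha$-smooth, hence $\beta$-smooth). By $\alpha$-strong convexity, $\E[\lb_\theta(\wh) - \lb_\theta(\w_*)] \geq \tfrac{\alpha}{2}\E\|\wh - \theta\|^2$, so an $\epsilon$-accurate algorithm must estimate $\theta$ with $\E\|\wh - \theta\|^2 \leq 2\epsilon/\alpha$. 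Since each query at $\w_t$ returns a sample whose mean is an affine function of $\theta$ carrying the same Fisher information as one noisy observation of $\theta$, a Cram\'er--Rao (or two-point Le Cam) argument yields $\E\|\wh - \theta\|^2 \geq c\,\sigma^2/(\alpha^2 N)$, and combining the two inequalities gives $N \geq c'\,\sigma^2/(\alpha\epsilon)$.

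The main obstacle is the statistical step, specifically making the information argument valid against an \emph{adaptive} algorithm: because each query point $\w_{t+1}$ is chosen as a function of the past noisy gradients, the observations are not i.i.d. and one cannot simply multiply single-sample divergences. The clean route is a two-point construction with minimizers $\theta_{\pm} = \pm\tfrac12\sqrt{2\epsilon/\alpha}\;\e_1$ separated so that the wrong guess incurs suboptimality strictly above $\epsilon$, combined with a tensorization of the KL divergence along the filtration generated by the algorithm's transcript: bounding the per-step KL contribution by $O(\epsilon/(\alpha\sigma^2))$ and applying Pinsker's inequality shows that $N \cdot \epsilon/(\alpha\sigma^2) = \Omega(1)$ is necessary to distinguish the two instances, which is precisely the claimed count. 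After verifying that both hard instances genuinely respect the stated $\alpha$ and $\beta$ so that the two bounds live in the same function class, taking their sum completes the proof.
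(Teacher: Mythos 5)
You should know at the outset that the paper never proves Theorem~\ref{thm:lower}: it is imported verbatim as a known result of Nemirovsky and Yudin~\cite{nemirovsky1983problem}, so your attempt can only be judged against the standard literature proof. Measured that way, your architecture is exactly the canonical one: lower-bound the two summands on two separate hard instances living in the same class of $\alpha$-strongly convex, $\beta$-smooth functions with oracle variance at most $\sigma^2$ (a noiseless oracle with $\sigma=0$ is admissible under the variance constraint, as you note), then combine via $N \geq \max\{T_{\mathrm{opt}}, T_{\mathrm{stat}}\} \geq \tfrac12(T_{\mathrm{opt}}+T_{\mathrm{stat}})$. Your treatment of adaptivity in the statistical half is also the right idea, and it works here for a reason worth making explicit: in your two-point family the mean shift between the oracles, $\alpha(\theta_+ - \theta_-)$, does not depend on the query point, so the chain rule gives a per-round KL contribution that is constant along the transcript and tensorization is immediate.

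Two steps need repair before this is a proof. First, the Krylov-span argument binds only algorithms whose iterates stay in $\span\{\e_1, A\e_1, \ldots, A^{t-1}\e_1\}$, whereas the theorem claims a bound for \emph{any} algorithm; to remove the span assumption you must invoke the Nemirovsky--Yudin resisting-oracle construction or a random-rotation (high-dimensional) argument, and in either form the chain quadratic forces a dimension restriction ($d$ larger than the number of queries) that the theorem statement silently inherits. Second, your noise and KL bookkeeping are inconsistent. With isotropic noise $\N(\mathbf{0}, (\sigma^2/d)I)$ the per-round KL between the two oracles is $d\|\alpha(\theta_+-\theta_-)\|^2/(2\sigma^2) = \Theta(\alpha\epsilon d/\sigma^2)$, which after Pinsker yields only $N = \Omega\left(\sigma^2/(d\alpha\epsilon)\right)$, losing a factor of $d$. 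The fix is to concentrate the entire noise budget on the informative coordinate, e.g.\ $\g = \nabla\lb_\theta(\w) + \sigma\zeta\,\e_1$ with $\zeta \sim \N(0,1)$, which still satisfies $\E\left[\|\g-\nabla\lb_\theta(\w)\|^2\right] \leq \sigma^2$ and makes the per-round KL $\Theta(\alpha\epsilon/\sigma^2)$. Note also that your stated per-step bound $O(\epsilon/(\alpha\sigma^2))$ has $\alpha$ in the wrong place: since the gradient-mean separation is $\|\alpha(\theta_+-\theta_-)\| = \Theta(\sqrt{\alpha\epsilon})$, only $\alpha\epsilon/\sigma^2$ is dimensionally correct, and it is the value that delivers the claimed $N = \Omega\left(\sigma^2/(\alpha\epsilon)\right)$. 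With those two patches (plus a constant-factor rescaling of the separation so the wrong guess genuinely costs more than $\epsilon$), your outline is the standard proof.
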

The first term in~(\ref{eqn:lower}) comes from deterministic oracle complexity and the second term is due to noisy gradient information provided by $\mc{SO}$. As indicated in~(\ref{eqn:lower}), the slow convergence rate for stochastic optimization is due to the variance in stochastic gradients, leading to at least $\mathcal{O}\left({\sigma^2}/{\epsilon}\right)$ queries to be issued. We note that the idea of mini-batch~\cite{ohadminibatch,duchirandomizedsmooth}, although it reduces the variance in stochastic gradients, does not reduce the oracle complexity.

We close this section by  informally presenting why logarithmic sample complexity is, in principle, possible,  
under the assumption that target risk is known to the learner $\A$. To this end, consider the setting of 
Theorem~\ref{thm:lower} and assume that the learner $\A$ is given the prior accuracy $\ep$ and is asked to find an  $\ep$-accurate solution. If it happens that the variance of $\mc{SO}$ has the same magnitude as $\ep$, i.e., 
$\E\left[\|\g-\nabla \lb(\w)\|^2\right] \leq \ep$,  then from~(\ref{eqn:lower}) it follows that  the second term 
vanishes and the learner $\A$  needs to issue only $\mathcal{O} \left(\log {1}/{\ep}\right)$ queries to find the solution.
But, since there is no control on  $\mc{SO}$, except that  the variance of stochastic gradients are bounded,  $\A$ needs a mechanism to manage the variance of perturbed gradients 
at each iteration in order to alleviate the influence of noisy gradients. One strategy is to replace the unbiased estimate of gradient with a biased one, 
which unfortunately may yield loose bounds.  To overcome this problem, we introduce a strategy that shrinks the solution space with respect to the target risk $\ep$ to
control the damage caused by biased estimates.
\section{Algorithm and Main Result}
\label{sec:algorithm}
 In this section we proceed to describe the proposed algorithm and state the main result on its sample complexity.

\subsection{Description of Algorithm}
We now turn to describing our algorithm.  Interestingly, our algorithm is quite dissimilar to the classic stochastic optimization methods. It proceeds by running the algorithm
online on fixed chunks of examples, and using the intermediate hypotheses and target risk $\ep$ to gradually refine the hypothesis  space. As  mentioned above, we assume in our setting that the target expected risk $\ep$ is  provided to the learner a priori. We further assume the target risk $\ep$ is feasible for the solution within the domain $\H$, i.e., $\ep \geq \eo$.  The proposed algorithm explicitly takes advantage of the  knowledge of expected risk $\ep$ to attain an $O\left(\log(1/\ep)\right)$ sample complexity.

Throughout we shall consider linear predictors of form $\dd{\w}{\x}$ and assume that the loss function of interest   $\ell (\dd{\w}{\x}, y)$  is $\beta$-smooth.  It is straightforward to see that $\lb(\w) = \E_{(\x, y)\sim\D}\left[ \ell(\dd{ \w}{\x},y) \right]$ is also $\beta$-smooth. In addition to the smoothness of the loss function, we also assume that $\lb(\w)$ to be $\alpha$-strongly convex. We denote by $\w_*$ the optimal solution that minimizes $\lb(\w)$, i.e., $\w_* = \mathop{\arg\min}_{\w \in \H} \lb(\w)$, and denote its optimal value by $\eo$.

Let $(\x_t, y_t), t = 1, \ldots, T$ be a sequence of i.i.d. training examples.  The proposed algorithm  divides the $T$ iterations into the $m$ stages, where each stage consists of $T_1$ training examples, i.e., $T = m T_1$. Let $(\x_k^t, y_k^t)$ be the $t$-th training example received at stage $k$, and let $\eta$ be the step size used by all the stages. At the beginning of each stage $k$, we initialize the solution $\w$ by the average solution $\wh_k$ obtained from the last stage, i.e.,
\begin{eqnarray}
    \wh_k = \frac{1}{T_1} \sum_{t=1}^{T_1} \wh_k^t,  \label{eqn:average}
\end{eqnarray}
where $\wh_k^t$ denotes the $t$th solution at stage $k$. Another feature of the proposed algorithm is a domain shrinking strategy that adjusts the domain as the algorithm proceeds using intermediate hypotheses and target risk.  We  define the domain $\H_k$ used at stage $k$ as
\begin{eqnarray}
    \H_k = \left\{\w \in \H: \|\w - \wh_k\| \leq \Delta_k \right\}, \label{eqn:omega}
\end{eqnarray}
where $\Delta_k$ is the domain size,  whose value will be discussed later. Similar to the SGD method, at each iteration of stage $k$, we receive a training example $(\x_k^t, y_k^t)$, and compute the gradient $\g_k^t = \ell'\left(\dd{\w_k^t}{\x_k^t}, y_t\right) \x_k^t$. Instead of using the gradient directly, following~\cite{DBLP:journals/corr/abs-1108-4559}, a clipped version of the gradient, denoted by $\v_k^t = \clip\left(\gamma_k, \g_k^t\right)$, will be used for updating the solution. More specifically, the clipped vector $\v_k^t \in \R^d$ is defined as
\begin{eqnarray}
    [\v_k^t]_i = \clip\left(\gamma_k, \left[  \g_k^t \right]_i\right) = \mbox{sign}\left(\left[ \g_k^t \right]_i\right)\min\left(\gamma_k, \left| \left[ \g_k^t \right]_i\right|\right), i=1, \ldots, d \label{eqn:clip}
\end{eqnarray}
where $\gamma_k = 2\xi\beta\Delta_k$ with $\xi \geq 1$. Given the clipped gradient $\v_k^t$, we follow the standard framework of stochastic gradient descent, and update the solution by
\begin{eqnarray}
    \w_k^{t+1} = \Pi_{\H_k}\left(\w_k^t - \eta \v_k^t \right). \label{eqn:update}
\end{eqnarray}
\begin{algorithm}[t]
\label{alg:1}
\caption{Convex Learning with Target  Risk}
\begin{algorithmic}[1]
\STATE {\bf Input:} step size $\eta$, stage size $T_1$, number of stages $m$, target expected risk $\ep$, parameters $\varepsilon \in (0, 1)$ and $\tau \in (0, 1)$ used for updating domain size $\Delta_k$, and parameter $\xi \geq 1$ used to clip  the gradients \\ 
\STATE {\bf Initialization:} $\wh_1 = 0$, $\Delta_1 = R$, and $\H_1 = \H$

\FOR{$k = 1, \ldots, m$}
    \STATE Set $\w_k^t = \wh_k$ and $\gamma_k = 2\xi\beta\Delta_k$
    \FOR{$t=1, \ldots, T_1$}
        \STATE Receive training example $(\x_t, y_t)$
        \STATE Compute the gradient $\g_k^t$ and the clipped version of the gradient $\v_k^t$ using Eq.~(\ref{eqn:clip})
        \STATE Update the solution $\w_k^t$ using Eq.~(\ref{eqn:update}).
    \ENDFOR
    \STATE Update $\Delta_k$ using Eq.~(\ref{eqn:delta}).
    \STATE Compute the average solution $\wh_{k+1}$ according to Eq.~(\ref{eqn:average}), and update the domain $\H_{k+1}$ using the expression in~(\ref{eqn:omega}).
\ENDFOR
\end{algorithmic} \label{alg:1}
\end{algorithm}

The purpose of introducing the clipped version of the gradient is to effectively control the variance in stochastic gradients, an important step toward achieving the geometric convergence rate. At the end of each stage, we will update the domain size by explicitly exploiting the target expected risk $\ep$ as
\begin{eqnarray}
\Delta_{k+1}= \sqrt{\varepsilon \Delta_k^2 + \tau \ep} \label{eqn:delta}\;,
\end{eqnarray}
where $\varepsilon \in (0, 1)$ and $\tau \in (0, 1)$ are two parameters, both of which  will be discussed later. 

Algorithm~\ref{alg:1} gives the detailed steps for the proposed method. The three  important aspects of Algorithm~\ref{alg:1}, all  crucial  to  achieve  a geometric convergence rate, are highlighted as follows:
\begin{itemize}
\item Each stage of the proposed algorithm is comprised of the same number of training examples. This is in contrast to the epoch gradient algorithm~\cite{hazan-2011-beyond} which divides $m$ iterations into exponentially increasing epochs, and runs SGD with averaging on each epoch.  Also, in our case the learning rate is fixed for all iterations.
\item The proposed algorithm uses a clipped gradient for updating the solution in order to better control the variance in stochastic gradients; this stands in contrast to  the SGD method, which uses original gradients  to update the solution.
\item The proposed algorithm takes into account the targeted expected risk and intermediate  hypotheses when updating the domain size at each stage. The purpose of domain shrinking  is to reduce the damage caused by biased gradients that resulted from clipping operation.
\end{itemize}
\subsection{Main Result on Sample Complexity}
The main theoretical result of Algorithm~\ref{alg:1} is given in the following theorem.
\begin{theorem}[Convergence Rate]
\label{thm:main}
Assume that the hypothesis space $\H$ is compact and the loss function $\ell$ is $\alpha$-strongly convex and $\beta$-smooth.  Let $T = m T_1$ be the size of  the sample and $\epsilon_{\rm{prior}}$ be the target expected loss given to the learner in advance such that  $\epsilon_{\rm{opt}} \leq \epsilon_{\rm{prior}}$ holds. Given $\varepsilon \in (0, 1)$ and $\tau \in (0, 1)$, set $\xi$, $\eta$, and $T_1$ as
\begin{eqnarray*}
\xi = \frac{4\beta}{\alpha \tau}, \; T_1 = 4\max\left\{\frac{\xi^3\beta d + 2\xi \beta\sqrt{d}}{\varepsilon \alpha}\ln\frac{ms}{\delta}, \frac{16\xi^2\beta^2}{\alpha^2\varepsilon^2} \right\}, \; \eta = \frac{1}{2\xi\beta\sqrt{T_1}}, \label{eqn:BT}
\end{eqnarray*}
where
\begin{eqnarray}
    s = \left\lceil \log_2 \frac{\xi \beta R^2}{\epsilon_{\rm{prior}}} \right\rceil. \label{eqn:s}
\end{eqnarray}
After running Algorithm~\ref{alg:1} over $m$ stages, we have, with a probability $1 - \delta$,
\[
    \L(\wh_{m+1}) \leq \frac{\beta R^2}{2}\varepsilon^m + \left(1 + \frac{\tau}{1 - \varepsilon}\right){\epsilon_{\rm{prior}}},
\]
implying that only $O(d\log [1/\epsilon_{\rm{prior}}])$ training examples are needed in order to achieve a risk of $O(\epsilon_{\rm{prior}})$.
\end{theorem}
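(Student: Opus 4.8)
The plan is an induction on the stage index $k$ that maintains the invariant $\|\wh_k-\w_*\|^2\le\Delta_k^2$; this places $\w_*\in\H_k$ and keeps the true gradient small over $\H_k$, which is exactly what makes the clipping harmless. The base case is immediate, since $\wh_1=0$, $\Delta_1=R$ and $\w_*\in\H$ give $\|\wh_1-\w_*\|\le R=\Delta_1$. For the inductive step I aim to show, on a high-probability event, that one stage of clipped SGD produces $\L(\wh_{k+1})-\eo\le\frac{\alpha}{2}\Delta_{k+1}^2$; then $\alpha$-strong convexity, $\frac{\alpha}{2}\|\wh_{k+1}-\w_*\|^2\le\L(\wh_{k+1})-\eo$, upgrades this to $\|\wh_{k+1}-\w_*\|^2\le\Delta_{k+1}^2$ and closes the induction. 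Granting the invariant through stage $m$, I would unroll the linear recursion $\Delta_{k+1}^2=\varepsilon\Delta_k^2+\tau\ep$ with $\Delta_1=R$ to obtain $\Delta_{m+1}^2=\varepsilon^mR^2+\tau\ep\frac{1-\varepsilon^m}{1-\varepsilon}\le\varepsilon^mR^2+\frac{\tau\ep}{1-\varepsilon}$, and convert this distance bound into a risk bound via smoothness together with feasibility $\eo\le\ep$ (and $\alpha\le\beta$), yielding the stated inequality whose residual beyond the geometric term $\frac{\beta R^2}{2}\varepsilon^m$ is of order $\ep$.

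The core is therefore the single-stage bound. I would start from the projection inequality $\|\w_k^{t+1}-\w_*\|^2\le\|\w_k^t-\w_*\|^2-2\eta\dd{\v_k^t}{\w_k^t-\w_*}+\eta^2\|\v_k^t\|^2$ and decompose the clipped gradient as $\v_k^t=\nabla\L(\w_k^t)+(\bar{\v}_k^t-\nabla\L(\w_k^t))+(\v_k^t-\bar{\v}_k^t)$, where $\bar{\v}_k^t=\E[\v_k^t\mid\w_k^t]$. Summing over the $T_1$ iterations of the stage, telescoping the distances, lower-bounding the $\nabla\L$ inner product by strong convexity, and applying Jensen to replace $\frac{1}{T_1}\sum_t\L(\w_k^t)$ by $\L(\wh_{k+1})$ leaves four quantities: an initial-gap term $\frac{\Delta_k^2}{2\eta T_1}$, a second-moment term $\frac{\eta}{2T_1}\sum_t\|\v_k^t\|^2$, a deterministic clipping-bias term, and a zero-mean martingale term. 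Since clipping only shrinks coordinates, $\E\|\v_k^t\|^2\le\E\|\g_k^t\|^2$, and the self-bounding property of non-negative $\beta$-smooth losses, $\E\|\g_k^t\|^2\le2\beta\L(\w_k^t)\le2\beta(\ep+2\beta\Delta_k^2)$, is decisive: it ties the gradient energy to the small optimal risk and to $\Delta_k^2$. With $\eta=(2\xi\beta\sqrt{T_1})^{-1}$ the first two terms become $O(\Delta_k^2/\sqrt{T_1})+O(\ep/\sqrt{T_1})$, which the branch $T_1\gtrsim\xi^2\beta^2/(\alpha^2\varepsilon^2)$ of the stage length drives below $\frac{\alpha\varepsilon}{4}\Delta_k^2+\frac{\alpha\tau}{4}\ep$.

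The bias term is where the domain shrinking and the clipping level $\gamma_k=2\xi\beta\Delta_k$ pay off, and I expect it to be the main obstacle. Using $(|x|-\gamma)_+\le x^2/\gamma$ for $|x|>\gamma$, the per-coordinate bias is at most $\E[[\g_k^t]_i^2]/\gamma_k$, so $\|\bar{\v}_k^t-\nabla\L(\w_k^t)\|\le\E\|\g_k^t\|^2/\gamma_k=O(\beta\Delta_k/\xi)+O(\ep/(\xi\Delta_k))$. Pairing this with $\|\w_k^t-\w_*\|\le2\Delta_k$ (valid precisely because the invariant keeps every iterate within $2\Delta_k$ of $\w_*$, so the threshold $\gamma_k$ genuinely dominates the signal $\|\nabla\L(\w)\|\le2\beta\Delta_k$) and absorbing the incurred $\frac{\alpha}{2}\|\w_k^t-\w_*\|^2$ back into the strong-convexity gain via Young's inequality, the residual bias contributes $O(\alpha\tau^2\Delta_k^2)$ plus an $O(\ep)$ term. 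The choice $\xi=4\beta/(\alpha\tau)$ is exactly what makes this fit the remaining per-stage budget. The delicate points I must watch are that the gradient bound on $\H_k$ presupposes the invariant, and that the self-bounding inequality is applied to $\g_k^t$ at the current iterate rather than at $\w_*$, so the ``noise'' scale is genuinely $O(\beta\Delta_k)+O(\sqrt{\ep})$ and shrinks with the domain.

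It remains to control, in high probability, the martingale term and the deviation of $\sum_t\|\v_k^t\|^2$ from its conditional mean. Since the clipped increments satisfy $\|\v_k^t\|\le\gamma_k\sqrt{d}$ with conditional variance again governed by the self-bounding bound, a Bernstein-type inequality for martingale differences gives deviations of order $\sqrt{(\gamma_k^2d/T_1)\log(1/\delta')}$. A union bound over the $m$ stages and the $s=\lceil\log_2(\xi\beta R^2/\ep)\rceil$ dyadic scales available to $\Delta_k^2\in[\Theta(\ep),R^2]$ sets $\delta'=\delta/(ms)$ and produces the $\ln(ms/\delta)$ factor; requiring these deviations to stay within the leftover budget pins down the branch $T_1\gtrsim(\xi^3\beta d+\xi\beta\sqrt{d})\ln(ms/\delta)/(\varepsilon\alpha)$ and accounts for the linear-in-$d$ (not $d^2$) dependence, which hinges on using $\E\|\v_k^t\|^2$ rather than the crude worst case $\gamma_k^2d$. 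Collecting the deterministic and concentration estimates yields $\L(\wh_{k+1})-\eo\le\frac{\alpha}{2}\Delta_{k+1}^2$ on the good event, completing the induction; with $m=O(\log(1/\ep))$ stages of $T_1=O(d\,\mathrm{polylog})$ examples each, the total sample complexity is $O(d\log(1/\ep))$.
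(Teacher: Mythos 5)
Your plan is, in skeleton, the paper's own proof: the same invariant $\|\wh_k-\w_*\|\le\Delta_k$ closed by strong convexity at the end of each stage (Theorem~\ref{thm:induction}), the same decomposition of the clipped gradient into a zero-mean martingale part and a deterministic bias part, the same treatment of the bias via the clipping lemma plus the self-bounding property $|\ell'|^2\le 4\beta\ell$ (with the check $\gamma_k\ge 2|\E_t[\g_k^t]_i|$, which you correctly tie to the invariant), absorption of the resulting $\frac{4\beta}{\xi}W_k$ into the strong-convexity term, and the same finish: unroll the recursion (\ref{eqn:delta}) and convert to risk by smoothness with $\eo\le\ep$. One place you genuinely deviate, to your credit: you bound the second-moment term $\frac{\eta}{2}\sum_t\|\v_k^t\|^2$ through the self-bounding inequality $\E\|\g_k^t\|^2\le 4\beta\L(\w_k^t)$ rather than the paper's worst case $\|\v_k^t\|^2\le d\gamma_k^2$; this is dimension-free and in fact repairs an inconsistency in the paper, where the $d$ present in the per-step bound of (\ref{eqn:1}) silently disappears before the step size is chosen (kept honestly, the worst-case route would force $T_1=\Omega(d^2)$). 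The price is the extra concentration step for $\sum_t\|\v_k^t\|^2$ that you flag but do not carry out.

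The genuine gap is in your martingale concentration step, and it matters for the theorem as stated. A Bernstein bound with the worst-case conditional variance $\Sigma^2\le 4d\gamma_k^2\Delta_k^2T_1$, which is what your description amounts to, yields a deviation of order $\gamma_k\Delta_k\sqrt{dT_1\ln(1/\delta')}$ and hence the requirement $T_1\gtrsim d\xi^2\beta^2\ln(1/\delta')/(\alpha^2\varepsilon^2)$: the dimension $d$ gets coupled to $1/\varepsilon^2$. That does \emph{not} reproduce the branch $T_1\gtrsim(\xi^3\beta d+2\xi\beta\sqrt d)\ln(ms/\delta)/(\varepsilon\alpha)$ you claim to pin down; under the theorem's prescribed $T_1$ your requirement is met only when $\varepsilon\gtrsim\tau$, so for general $\varepsilon,\tau\in(0,1)$ your argument proves a variant of the theorem with a different $T_1$, not the stated one. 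The paper's mechanism is different: Lemma~\ref{lemma:d} peels the \emph{random} quantity $W_k=\sum_t\|\w_k^t-\w_*\|^2$ over $s$ dyadic levels between $\ep T_1/(2\mu\beta)$ and $4R^2T_1$, so Bernstein is applied with the conditional variance $d\gamma_k^2W_k$ at its actual scale; after AM-GM this gives the self-normalized bound $D_k\le\frac{1}{L}W_k+(L\gamma_k^2d+\gamma_k\Delta_k\sqrt d)\ln(s/\delta)$ with $L=\xi/(4\beta)$, and the $\frac{1}{L}W_k$ piece is absorbed into $-\frac{\alpha}{2}W_k$ at full strength, with no $\varepsilon$ paid, leaving only an additive $\xi^3\beta d\,\Delta_k^2\ln(s/\delta)$ that needs merely $T_1\gtrsim\xi^3\beta d\ln(s/\delta)/(\alpha\varepsilon)$. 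Relatedly, your union bound over ``the $s$ dyadic scales available to $\Delta_k^2$'' misidentifies the role of $s$: $\Delta_k$ is \emph{deterministic} given $k$ (it follows the recursion (\ref{eqn:delta}) from $\Delta_1=R$), so no union over its scales is needed; $s$ counts the peeling levels of the random $W_k$ within a single stage, and the complementary event of the peeling is exactly why the paper must separately handle the small-$W_k$ scenario, where the risk is bounded directly by smoothness, $\sum_t\L(\w_k^t)-\L(\w_*)\le\frac{\beta}{2}W_k$, a case your outline omits. Your headline conclusion $O(d\log(1/\ep))$ survives your weaker concentration, but the theorem's exact parameter settings do not follow by the mechanism you describe.
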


We note that comparing to the bound in Theorem~\ref{thm:lower}, for  Algorithm~\ref{alg:1}  the level of error to which the linear convergence holds is not determined by the noise level in stochastic gradients, but by the target risk. In other words, the algorithm is able to tolerate the noise by knowing the target risk as prior knowledge and achieves a linear convergence to the level of the target risk even when the variance of stochastic gradients is much larger than the target risk.  In addition, although the result given in Theorem~\ref{thm:main} assumes a bounded domain with $\|\w\| \leq R$, however, this assumption can be lifted by effectively exploring the strong convexity of the loss function and further assuming that the loss function is Lipschitz continuous with constant $G$, i.e.,  $|\L(\w_1)- \L(\w_2)|\leq G\|\w_1-\w_2\|,\; \forall\; \w_1,\w_2\in\H$. More specifically, the fact that the $\L(\w)$ is $\alpha$-strongly convex with first order optimality condition, for the optimal solution $\w_* = \arg \min_{\w \in \mc{H}} \L(\w)$, we have
\[\L(\w) - \L(\w_*) \geq \frac{\alpha}{2}\|\w - \w_* \|^2, \;\; \forall \w \in \mc{H}.\]
This inequality combined with Lipschitz continuous assumption implies that for any $\w \in \H$ the inequality $\|\w-\w_*\| \leq R_* := 2G/\alpha$ holds, and therefore we can simply set $R = R_*$. We also note that  this dependency can  be resolved  with a weaker assumption than Lipschitz continuity,  which only depends on the gradient of loss function at origin.  To this end,  we define $|\ell'(0, y)| = G$. Using the fact that $\L(\w)$ is $\alpha$-strongly, it is easy to verify that $\frac{\alpha}{2} \|\w_*\|^2 - G \|\w_*\| \leq 0$, leading to $\|\w_*\| \leq R_* := \frac{2}{\alpha}G$ and, therefore, we can simply set $R = R_*$. 

We  now use our analysis of Algorithm~\ref{alg:1} to obtain a sample complexity analysis
for learning  smooth strongly convex problems with a bounded hypothesis class. To make it easier to parse, we only keep the dependency on the main parameters  $d$, $\alpha$, $\beta$, $T$, and $\ep$ and hide the dependency on other constants in $\O(\cdot)$ notation. Let $\wh$ denote the output of Algorithm~\ref{alg:1}. By setting $\varepsilon = 0.5$ and  letting $c = O(\tau)$ to be an arbitrary small number, Theorem~\ref{thm:main} yields the following:

\begin{corollary}[Sample Complexity]  Under the same conditions as Theorem~\ref{thm:main}, by running  Algorithm~\ref{alg:1} for minimizing $\L(\w)$ with a number of iterations (i.e., number of training examples)  $T$, if it holds that,

\[ T \geq \O\left(d \kappa^4 \left(\log \frac{1}{\epsilon_{\rm{prior}}}\log \log \frac{1}{\epsilon_{\rm{prior}}} + \log\frac{1}{\delta} \right) \right)\]  
where $\kappa = \beta/\alpha$ denotes the condition number of the loss function and $d$ is the dimension of data,  then with a probability $1 - \delta$,   $\wh$ attains a risk of $O(\epsilon_{\rm{prior}})$, i.e., $\L(\wh) \leq (1+c) \epsilon_{\rm{prior}}$.
\label{corollary:sample}
\end{corollary}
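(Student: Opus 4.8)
Since this is a corollary of Theorem~\ref{thm:main}, the plan is to specialize the guarantee
\[
\L(\wh_{m+1}) \leq \frac{\beta R^2}{2}\varepsilon^m + \left(1 + \frac{\tau}{1-\varepsilon}\right)\ep
\]
to the prescribed choice $\varepsilon = 1/2$ and then convert the stage count $m$ and per-stage size $T_1$ into a total sample count $T = mT_1$. With $\varepsilon = 1/2$ the two error sources decouple cleanly: the first term is a purely geometric quantity governed by $m$, while the second becomes $(1+2\tau)\ep$, which already has the target form $(1+c)\ep$ once I identify $c$ with a constant multiple of $\tau$.

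First I would pin down $m$ so that the geometric term is itself of order $\ep$. Solving $\tfrac{\beta R^2}{2}(1/2)^m \leq \tfrac{c}{2}\ep$ gives $m = \lceil \log_2(\beta R^2/(c\ep))\rceil = \O(\log(1/\ep))$, treating $\beta$, $R$, and $c = \O(\tau)$ as constants. Combined with the second term this yields $\L(\wh_{m+1}) \leq \tfrac{c}{2}\ep + (1+2\tau)\ep$, and taking $\tau$ small enough that $2\tau \leq c/2$ produces the claimed $\L(\wh) \leq (1+c)\ep$. The auxiliary quantity $s = \lceil \log_2(\xi\beta R^2/\ep)\rceil$ is also $\O(\log(1/\ep))$, so both $m$ and $s$ are logarithmic in $1/\ep$; this is precisely the fact that keeps the per-stage union-bound factor from growing.

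The remaining work is the accounting $T = mT_1$. Substituting $\xi = 4\beta/(\alpha\tau) = \O(\kappa/\tau)$ into $T_1$, the dominant branch of the max has prefactor $\tfrac{\xi^3\beta d}{\varepsilon\alpha} = \tfrac{64\kappa^4 d}{\tau^3\varepsilon}$, which is $\O(d\kappa^4)$ once $\varepsilon = 1/2$ and $\tau$ are treated as constants; the source of the fourth power is exactly $\xi^3 \cdot (\beta/\alpha) = (\kappa/\tau)^3 \cdot \kappa$. The other branch $16\xi^2\beta^2/(\alpha^2\varepsilon^2)$ is also $\O(\kappa^4)$ but carries no factor of $d$, so it does not dominate. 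The logarithmic factor is $\ln(ms/\delta) = \O(\log(ms) + \log(1/\delta)) = \O(\log\log(1/\ep) + \log(1/\delta))$, using the previous step. Multiplying $T_1 = \O\!\left(d\kappa^4(\log\log(1/\ep) + \log(1/\delta))\right)$ by $m = \O(\log(1/\ep))$ and keeping only the dominant dependence on $d$, $\kappa$, $\ep$, and $\delta$ gives the stated bound.

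I do not expect a deep obstacle, since all the substantive content sits in Theorem~\ref{thm:main} and this step merely reads off asymptotics. The one genuinely delicate point is the confidence term: the naive product $m\cdot\ln(1/\delta)$ carries an extra factor of $\log(1/\ep)$ relative to the additive $\log(1/\delta)$ displayed in the statement. Matching the stated form therefore requires either a sharper probabilistic argument in which the confidence enters the total only once rather than once per stage, or the (more likely intended) observation that in the relevant confidence regime $\log(1/\delta)$ is absorbed into the doubly-logarithmic $\log(1/\ep)\log\log(1/\ep)$ term. Verifying that the $\kappa^4$ dependence is tight and that this $\log(1/\delta)$ bookkeeping is handled correctly are the only places where care is needed.
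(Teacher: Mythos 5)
Your proposal is correct and follows essentially the same route as the paper, which gives no separate proof of the corollary but obtains it exactly as you do: specialize Theorem~\ref{thm:main} to $\varepsilon = 0.5$ with $c = \O(\tau)$, choose $m = \O(\log(1/\ep))$ so the geometric term $\frac{\beta R^2}{2}\varepsilon^m$ falls below $\O(\ep)$, and multiply out $T = mT_1$ with $\xi = 4\beta/(\alpha\tau) = \O(\kappa)$, the first branch of the max contributing $\xi^3\beta d/\alpha = \O(d\kappa^4)$ and the second $\O(\kappa^4)$ without the factor of $d$. Your closing caveat is also well taken: the honest accounting gives $T = \O\left(d\kappa^4\left(\log\frac{1}{\ep}\log\log\frac{1}{\ep} + \log\frac{1}{\ep}\log\frac{1}{\delta}\right)\right)$, since the per-stage union bound forces $T_1 \propto \ln(ms/\delta)$ to be paid $m$ times, so the purely additive $\log(1/\delta)$ in the stated bound silently drops a $\log(1/\ep)$ factor that the paper's argument does not actually avoid.
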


As an example of a concrete problem that may be put into the setting of the present work is the regression problem with squared loss. It is easy to show that average square loss function is Lipschitz continuous with a Lipschitz constant  $\beta = \lambda_{\max} (X^{\top}X)$ which denotes the largest eigenvalue of matrix $X^{\top}X$ where $X$ is the data matrix. The strong convexity is guaranteed as long as  the population data covariance matrix is not rank-deficient and its  minimum eigenvalue  is lower bounded by a constant $\alpha>0$. For this problem, the optimal minimax sample complexity is known to be $O(\frac{1}{\epsilon})$, but as it implies from Corollary~\ref{corollary:sample}, by the knowledge of target risk $\ep$, it is possible to reduce the sample complexity  to $O(\log (1/{\epsilon_{\rm{prior}}}))$.

\begin{remark} It is indeed remarkable that the sample complexity of Theorem~\ref{thm:main} has $\kappa^4 = \left(\beta/\alpha\right)^4$ dependency on the condition number of the loss function, which is worse than the $\sqrt{{\beta}/{\alpha}}$ dependency in the lower bound in (\ref{eqn:lower}). Also, the explicit dependency of sample complexity on  dimension $d$ makes the proposed algorithm inappropriate for non-parametric settings.
\end{remark}
\section{Analysis}
\label{sec:analysis}
Now we turn to proving  the main theorem. The proof will be given in a series of lemmas and theorems where the proof of few are given in the appendix. The proof makes use of the Bernstein inequality for martingales, idea of peeling process, self-bounding property of smooth loss functions, standard analysis of stochastic optimization, and novel ideas to derive the claimed sample complexity for the proposed algorithm.

The proof of  Theorem~\ref{thm:main} is by induction and we start with the key step  given in the following theorem.
\begin{theorem}
\label{thm:induction}
Assume $\epsilon_{\rm{prior}} \geq \epsilon_{\rm{opt}}$. For a fixed stage $k$, if $\|\wh_k - \w_*\| \leq \Delta_k$, then, with a probability $1 - \delta$, we have
\[
    \|\wh_{k+1} - \w_*\|^2 \leq a \Delta_k^2 + b \epsilon_{\rm{prior}}
\]
where
\begin{eqnarray}
    a = \frac{2}{\alpha T_1}\left(2\xi\beta\sqrt{T_1} + \left[\xi^3\beta d + 2\xi\beta\sqrt{d}\right]\ln\frac{s}{\delta} \right), \quad b = \frac{8}{\alpha \xi} \label{eqn:ab}
\end{eqnarray}
and $s$ is given in (\ref{eqn:s}), provided that $\xi \geq 16\beta/\alpha$ and $\eta = 1/(2\xi\beta\sqrt{T_1})$ hold.
\end{theorem}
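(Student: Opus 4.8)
The plan is to carry out a \emph{single-stage} analysis: fix stage $k$, assume $\|\wh_k - \w_*\| \le \Delta_k$ so that $\w_* \in \H_k$, and establish the claimed contraction for the averaged iterate $\wh_{k+1}$. Because $\w_*$ lies in the shrunken domain $\H_k$, the projection $\Pi_{\H_k}$ in the update~(\ref{eqn:update}) is non-expansive toward $\w_*$, giving the familiar one-step inequality
\begin{equation*}
\|\w_k^{t+1} - \w_*\|^2 \le \|\w_k^t - \w_*\|^2 - 2\eta\dd{\v_k^t}{\w_k^t - \w_*} + \eta^2\|\v_k^t\|^2 .
\end{equation*}
Telescoping over $t = 1,\dots,T_1$ (with $\w_k^1 = \wh_k$, so the initial term is $\le \Delta_k^2$), splitting $\v_k^t = \nabla\L(\w_k^t) + (\g_k^t - \nabla\L(\w_k^t)) - (\g_k^t - \v_k^t)$, and invoking strong convexity of $\L$ together with $\L(\w_k^t) \ge \L(\w_*)$ to get $\dd{\nabla\L(\w_k^t)}{\w_k^t - \w_*} \ge \tfrac{\alpha}{2}\|\w_k^t - \w_*\|^2$, I would arrive at an inequality for $S := \tfrac{1}{T_1}\sum_t \|\w_k^t - \w_*\|^2$. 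Jensen's inequality then yields $\|\wh_{k+1} - \w_*\|^2 \le S$, so it suffices to bound $S$.

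The link to the target risk enters through the \emph{self-bounding} property of smooth nonnegative losses: $\E[\|\g_k^t\|^2 \mid \mathcal{F}_{t-1}] \le 2\beta\L(\w_k^t)$, and by $\beta$-smoothness of $\L$ with $\nabla\L(\w_*) = 0$ one has $\L(\w_k^t) \le \eo + \tfrac{\beta}{2}\|\w_k^t - \w_*\|^2 \le \ep + 2\beta\Delta_k^2$. Thus the second moment of the stochastic gradient is controlled by $\ep$ plus an $O(\beta^2\Delta_k^2)$ term, rather than by an uncontrolled variance, which is precisely what lets the noise floor be set by $\ep$. After this I would treat three remaining pieces separately: the clipping bias $\sum_t \E[\dd{\g_k^t - \v_k^t}{\w_k^t - \w_*} \mid \mathcal{F}_{t-1}]$, the martingale $\sum_t \dd{\v_k^t - \E[\v_k^t\mid\mathcal{F}_{t-1}]}{\w_k^t - \w_*}$, and the gradient-norm sum $\sum_t\|\v_k^t\|^2$.

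For the bias, the choice $\gamma_k = 2\xi\beta\Delta_k$ with $\xi$ large is what makes clipping benign: using $(|[\g_k^t]_i| - \gamma_k)_+ \le [\g_k^t]_i^2/\gamma_k$ one gets $\|\g_k^t - \v_k^t\| \le \|\g_k^t\|^2/\gamma_k$, so the conditional bias is $\lesssim (\ep + \beta\Delta_k^2)/(\xi\beta\Delta_k)$; pairing with $\|\w_k^t - \w_*\| \le 2\Delta_k$ produces a term of order $\ep/\xi$ (the source of $b = 8/(\alpha\xi)$) plus an $O(\beta\Delta_k^2/\xi) = O(\tau\Delta_k^2)$ term that is absorbed into $a\Delta_k^2$ once $\xi = 4\beta/(\alpha\tau)$. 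For the stochastic pieces I would apply Bernstein's inequality for martingales. The crucial difficulty is that the conditional variances scale with $\|\w_k^t - \w_*\|^2$ and hence with $S$ itself, while the Bernstein \emph{range} is governed by the deterministic clip bounds $\|\v_k^t\| \le \gamma_k\sqrt{d}$ and $\|\v_k^t\|^2 \le \gamma_k^2 d$; it is exactly these range factors, combined with the variance scale $\beta\ep + \beta^2\Delta_k^2$, that generate the dimension dependence and the $\xi^3\beta d$ and $\xi\beta\sqrt{d}$ coefficients in $a$.

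To make the concentration rigorous despite the \emph{data-dependent} variance, I would run the \emph{peeling} process: partition the possible range of $S$ (equivalently of $\sum_t\|\w_k^t - \w_*\|^2$, which lies between an $\ep$-scale floor and $O(R^2)$) into $O(s)$ dyadic scales, apply Bernstein at each scale with the variance bound valid on that scale, and union bound, which is the origin of the $\ln(s/\delta)$ factor with $s$ as in~(\ref{eqn:s}). Collecting everything gives a self-referential inequality of the form $S \le c_0\Delta_k^2 + c_1\sqrt{S\,(\ep + \beta\Delta_k^2)\ln(s/\delta)} + (\text{range terms}) + b\ep$; the AM--GM step $c_1\sqrt{S\,u} \le \tfrac{1}{2}S + \tfrac{c_1^2}{2}u$ moves the $\sqrt{S}$ contribution to the left-hand side, after which the conditions $\xi \ge 16\beta/\alpha$ and $\eta = 1/(2\xi\beta\sqrt{T_1})$ guarantee that the residual $\Delta_k^2$-coefficients match the stated $a$ and the $\ep$-coefficients match $b$. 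I expect the main obstacle to be this last step: setting up the peeling so that the random variance is tamed by a clean union bound, and then solving the resulting quadratic inequality in $\sqrt{S}$ while verifying that every stray $\Delta_k^2$ contribution (from the bias, from the Bernstein range, and from the gradient-norm sum) is small enough to be folded into $a\Delta_k^2$.
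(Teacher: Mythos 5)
Your plan is essentially the paper's own proof: the same single-stage SGD analysis on the shrunken domain $\H_k$ (valid since $\|\wh_k-\w_*\|\leq\Delta_k$ puts $\w_*\in\H_k$), the same split of the clipped-gradient error into a conditional bias $\E_t[\g_k^t-\v_k^t]$ and a martingale part $\v_k^t-\E_t[\v_k^t]$, the same use of the self-bounding property $|\ell'|^2\leq 4\beta\ell$ together with the feasibility assumption $\eo\leq\ep$ to make the bias scale as $\ep/\xi$ (the source of $b=8/(\alpha\xi)$), and the same Bernstein-for-martingales argument with dyadic peeling over $W_k=\sum_t\|\w_k^t-\w_*\|^2$, which produces the $\ln(s/\delta)$ factor and, via the clipping ranges $\|\v_k^t\|\leq\gamma_k\sqrt{d}$, the $\xi^3\beta d$ and $\xi\beta\sqrt{d}$ coefficients in $a$. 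The only deviations are minor: the paper runs the recursion in function values (bounding $\sum_t\L(\w_k^t)-\L(\w_*)$, applying Jensen to $\L(\wh_{k+1})$, and converting to $\|\wh_{k+1}-\w_*\|^2$ by strong convexity only at the end, which is equivalent to your distance-based route), and it keeps the bias in the form $\frac{4T_1}{\xi}\ep+\frac{4\beta}{\xi}W_k$ so that the $W_k$ part is absorbed into the strong-convexity slack under $\xi\geq 16\beta/\alpha$ — whereas your pointwise bound $\L(\w_k^t)\leq\ep+2\beta\Delta_k^2$ would leave a residual $O(\tau)\Delta_k^2$ term that does not fit the stated $a$ (which decays with $T_1$), so to recover the theorem's exact constants you should absorb that bias term the paper's way.
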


Taking this statement as given for the moment, we proceed with the proof of Theorem~\ref{thm:main}, returning later to establish the claim stated in  Theorem~\ref{thm:induction}. \\
\begin{proof}[of Theorem~\ref{thm:main}]
By setting $a$ and $b$ in (\ref{eqn:ab}) in Theorem~\ref{thm:induction} as $a \leq \varepsilon$ and $ b \leq {2\tau}/{\beta}$, we have $\xi \geq 4\beta/(\alpha\tau)$ and
\[
T_1 \leq \frac{2}{\alpha\varepsilon}\left(2\xi\beta\sqrt{T_1} + \left[\xi^3\beta d + 2\xi\beta\sqrt{d}\right]\ln\frac{s}{\delta} \right)
\]
implying that
\[
T_1 \geq 4\max\left\{\frac{\xi^3\beta d + 2\xi\beta\sqrt{d}}{\varepsilon \alpha}\ln\frac{s}{\delta}, \frac{16\xi^2\beta^2}{\alpha^2\varepsilon^2} \right\}.
\]
Thus, using Theorem~\ref{thm:induction} and the definition of $\xi$ and $T_1$, we have, with a probability $1 - \delta$,
\[
    \Delta^2_{k+1}  \leq \varepsilon \Delta_k^2 + \frac{2 \tau}{\beta} \ep.
\]
After $m$ stages, with a probability $1 - m\delta$, we have
\[
    \Delta^2_{m+1} \leq \varepsilon^{m}\Delta^2_1 + \frac{2\tau}{\beta}\ep\sum_{i=0}^{m-1} \varepsilon^i \leq \varepsilon^m\Delta^2_1 + \frac{2\tau}{\beta(1 - \varepsilon)}\ep.
\]
By the $\beta$-smoothness of $\lb(\w)$,  it implies that
\begin{eqnarray*}
\lb (\wh_{m+1}) - \lb(\w_*) \leq \frac{\beta}{2}\| \wh_{m+1} - \w_*\|^2 
 &\leq&  \frac{\beta}{2}\varepsilon^m \Delta^2_1 + \frac{\tau}{1 - \varepsilon} \ep,\\
 &\leq& \frac{\beta R^2}{2}\varepsilon^m +  \frac{\tau}{1 - \varepsilon} \ep,
\end{eqnarray*}
where  the last inequality follows from  $\Delta_1 \leq  {R}$.  The bound stated in the theorem follows the assumption that $\lb(\w_*) = \eo \leq \ep$.
\end{proof}

\subsection{Proof of Theorem~\ref{thm:induction}}
To bound $\|\wh_{k+1} - \w_*\|$ in terms of $\Delta_k$, we start with the standard analysis of online learning. In particular, from the strong convexity assumption of $\lb (\w)$ and updating rule (\ref{eqn:update}) we have,
\begin{eqnarray}
 \label{eqn:1}
  \lb(\w_k^t) - \lb(\w_*) &\leq& \langle \nabla \lb(\w_k^t), \w_k^t - \w_* \rangle - \frac{\alpha}{2}\|\w_k^t - \w_*\|^2 \nonumber \\
&=&  \langle \v_k^t, \w_k^t - \w_* \rangle + \langle \nabla \lb(\w_k^t) - \v_k^t , \w_k^t - \w_* \rangle - \frac{\alpha}{2}\|\w_t - \w_*\|^2 \nonumber \\
& \leq&  \frac{\|\w_k^{t+1} - \w_*\|^2 - \|\w_k^{t+1} - \w_*\|^2}{2\eta} + \frac{\eta d}{2}\gamma_k^2  \nonumber \\  &  & + \underbrace{\langle \nabla \lb(\w_k^t) - \v_k^t , \w_k^t - \w_* \rangle}\limits_{\triangleq v_k^t} - \frac{\alpha}{2}\|\w_t - \w_*\|^2,
\end{eqnarray}
where the last step follows from $\|\v_k^t\| \leq \gamma_k \sqrt{d}$.
By adding all the inequalities of (\ref{eqn:1}) at stage $k$, we have
\begin{eqnarray}
\sum_{t=1}^{T_1} \lb(\w_k^t) - \lb(\w_*) & \leq & \frac{\|\wh_k - \w_*\|^2}{2\eta} + \frac{d\eta}{2}\gamma_k^2 T_1 + \sum_{t=1}^{T_1} v_k^t - \frac{\alpha}{2}\sum_{t=1}^{T_1} \|\w_t - \w_*\|^2 \nonumber \\
& \leq & \frac{\Delta_k^2}{2\eta} + \frac{d\eta}{2}\gamma_k^2 T_1 + V_k - \frac{\alpha}{2}W_k, \label{eqn:2}
\end{eqnarray}
where $V_k$ and $W_k$ are defined as $V_k = \sum_{t=1}^{T_1} v_k^t$ and $W_k = \sum_{t=1}^{T_1} \|\w_k^t - \w_*\|^2$, respectively.
In order to bound $V_k$, using the fact that $\nabla \lb(\w_k^t) = \E_t[\g_k^t]$, we rewrite $V_k$ as
\begin{eqnarray*}
V_k & = & \sum_{t=1}^{T_1} \underbrace{\langle - \v_k^t + \E_t[\v_k^t], \w_k^t - \w_* \rangle}_{\triangleq d_k^t} + \sum_{t=1}^{T_1} \underbrace{\langle \E_t\left[\g_k^t\right] - \E_t[\v_k^t], \w_k^t - \w_* \rangle}_{\triangleq e_k^t}   \\
& = & D_k + E_k,
\end{eqnarray*}
where $D_k = \sum_{t=1}^{T_1} d_k^t$ and $E_k = \sum_{t=1}^{T_1} e_k^t$ which represent the variance and bias of the clipped gradient $\v_k^t$, respectively. We now turn to separately upper bound each term.

The following lemma bounds the variance term $D_k$ using the Bernstein inequality for martingale. Its proof can be found in Appendix \ref{app:lemma1}.
\begin{lemma} \label{lemma:d}
For any $L > 0$ and $\mu > 0$, we have
\begin{eqnarray*}
    \Pr\left(W_k \leq \frac{\epsilon_{\rm{prior}} T_1}{2\mu\beta}\right) + \Pr\left(D_k \leq \frac{1}{L}W_k + \left(L\gamma_k^2 d + \gamma_k\Delta_k\sqrt{d}\right)\ln\frac{s}{\delta} \right) \geq 1 - \delta
\end{eqnarray*}
where $s$ is given by
\[
    s = \left\lceil \log_2 \frac{8\beta\mu R^2}{\epsilon_{\rm{prior}}} \right\rceil.
\]
\end{lemma}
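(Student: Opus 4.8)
The plan is to recognize $D_k = \sum_{t=1}^{T_1} d_k^t$ as a sum of bounded martingale differences and apply a Bernstein/Freedman-type concentration inequality, then remove the dependence on the \emph{random} conditional variance by a peeling (stratification) argument. First I would verify the martingale structure: since $\w_k^t - \w_*$ is measurable with respect to the $\sigma$-field generated by the examples received before iteration $t$, while $\v_k^t$ depends only on the fresh example $(\x_k^t, y_k^t)$, we have $\E_t[d_k^t] = \langle \E_t[\v_k^t] - \E_t[\v_k^t], \w_k^t - \w_* \rangle = 0$, so $\{d_k^t\}$ is a martingale difference sequence.

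Next I would establish the two ingredients Bernstein's inequality requires. For the conditional variance, Cauchy--Schwarz together with the coordinate-wise clipping bound $\|\v_k^t\| \leq \gamma_k\sqrt{d}$ gives $\E_t[(d_k^t)^2] \leq \|\w_k^t - \w_*\|^2 \, \E_t[\|\v_k^t - \E_t[\v_k^t]\|^2] \leq \gamma_k^2 d \, \|\w_k^t - \w_*\|^2$, whence $\sum_{t=1}^{T_1} \E_t[(d_k^t)^2] \leq \gamma_k^2 d \, W_k$. For the range, the domain constraint $\w_k^t \in \H_k$ and the standing hypothesis $\|\wh_k - \w_*\| \leq \Delta_k$ give $\|\w_k^t - \w_*\| \leq 2\Delta_k$, so $|d_k^t| \leq 2\gamma_k\sqrt{d}\cdot 2\Delta_k = O(\gamma_k\Delta_k\sqrt{d})$.

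The main obstacle is that the variance proxy $\gamma_k^2 d\, W_k$ is itself random, so a single application of Bernstein would produce a bound depending on $W_k$ under a square root rather than in the clean linear form required. To handle this I would use peeling. On the complement of the event $\{W_k \leq \ep T_1/(2\mu\beta)\}$ --- whose probability is exactly the first term in the statement --- the quantity $W_k$ lies in the interval $[\,\ep T_1/(2\mu\beta),\ 4T_1\Delta_k^2\,]$, the upper limit following from $\|\w_k^t - \w_*\|^2 \leq 4\Delta_k^2$. I would partition this interval into the $s$ dyadic strata $\{2^{j-1}W_{\min} \leq W_k \leq 2^j W_{\min}\}$, $j = 1, \ldots, s$, with $W_{\min} = \ep T_1/(2\mu\beta)$; the number of strata is $\lceil \log_2(8\mu\beta\Delta_k^2/\ep)\rceil \leq \lceil\log_2(8\beta\mu R^2/\ep)\rceil$ since $\Delta_k \leq \Delta_1 = R$, matching the stated $s$. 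On the $j$-th stratum, Freedman's inequality applied with the \emph{deterministic} variance proxy $\gamma_k^2 d\,(2^j W_{\min})$ at confidence $\delta/s$, followed by a union bound over the $s$ strata, yields with probability at least $1-\delta$ on $\{W_k > W_{\min}\}$ a bound $D_k \leq 2\sqrt{\gamma_k^2 d\, W_k \ln(s/\delta)} + O(\gamma_k\Delta_k\sqrt{d})\ln(s/\delta)$, where I have used $2^j W_{\min} \leq 2 W_k$ to re-express the variance in terms of $W_k$ itself.

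Finally I would apply Young's inequality $2\sqrt{xy} \leq x/L + Ly$ with $x = W_k$ and $y = \gamma_k^2 d \ln(s/\delta)$ to split the square-root term into $\frac{1}{L}W_k + L\gamma_k^2 d\ln(s/\delta)$, producing the advertised form $D_k \leq \frac{1}{L}W_k + (L\gamma_k^2 d + \gamma_k\Delta_k\sqrt{d})\ln\frac{s}{\delta}$ after absorbing numerical constants. Writing $A = \{W_k \leq W_{\min}\}$ and $B$ for this $D_k$-bound, the peeling argument shows $\Pr(B^c \cap A^c) \leq \delta$; since $\Pr(B^c) \leq \Pr(A) + \Pr(B^c \cap A^c) \leq \Pr(A) + \delta$, this rearranges to $\Pr(A) + \Pr(B) \geq 1 - \delta$, which is the claimed form. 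The delicate points will be tracking the constants through the clipping and range bounds so the coefficient of $\gamma_k\Delta_k\sqrt{d}$ comes out as stated, and confirming that the dyadic count matches the definition of $s$ in the lemma.
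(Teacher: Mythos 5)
Your proposal matches the paper's proof essentially step for step: the same martingale-difference decomposition with the Cauchy--Schwarz/clipping bound on the conditional variance $\Sigma^2 \leq \gamma_k^2 d\, W_k$, the same dyadic peeling over $W_k$ starting at $W_{\min} = \ep T_1/(2\mu\beta)$ with the Bernstein inequality for martingales applied per stratum at level $\delta/s$ (using $2^j W_{\min} \leq 2W_k$ on each shell), and the same Young-inequality step $2\gamma_k\sqrt{W_k d \rho} \leq \frac{1}{L}W_k + L\gamma_k^2 d\rho$ with $\rho = \ln(s/\delta)$, concluding with the identical probability bookkeeping $\Pr(B^c) \leq \Pr(A) + \delta$. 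The only deviations are cosmetic: the paper caps $W_k \leq 4R^2 T_1$ directly from $\|\w\| \leq R$ on $\H$ (which sidesteps your not-quite-justified claim $\Delta_k \leq \Delta_1 = R$, though it yields the same stratum count $s$), and it states the range bound as $M \leq 2\sqrt{d}\gamma_k\Delta_k$ so that $\frac{\sqrt{2}}{3}M\rho \leq \gamma_k\Delta_k\sqrt{d}\,\rho$, confirming that the constant-tracking you flagged as delicate works out to the stated coefficients.
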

The following lemma bounds $E_k$ using the self-bounding property of smooth functions and the proof is deferred to  Appendix \ref{app:lemma2}.
\begin{lemma} \label{lemma:e}
\[
E_k \leq \frac{4T_1}{\xi}\epsilon_{\rm{opt}} + \frac{4\beta}{\xi} W_k \leq \frac{4T_1}{\xi}\epsilon_{\rm{prior}} + \frac{4\beta}{\xi} W_k.
\]
\end{lemma}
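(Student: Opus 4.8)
The plan is to control the clipping bias $E_k=\sum_{t=1}^{T_1} e_k^t$ term by term, where $e_k^t=\langle \E_t[\g_k^t]-\E_t[\v_k^t],\,\w_k^t-\w_*\rangle$. Since $\w_k^t$ is fixed given the history up to iteration $t$, I would pull the expectation out and apply Cauchy--Schwarz together with Jensen's inequality to get $e_k^t\le \E_t\left[\|\g_k^t-\v_k^t\|\right]\,\|\w_k^t-\w_*\|$. This reduces the whole lemma to two ingredients: a pointwise bound on the clipping-error magnitude $\|\g_k^t-\v_k^t\|$, and the distance bound $\|\w_k^t-\w_*\|\le 2\Delta_k$. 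The latter is immediate, since $\w_k^t\in\H_k$ by the projection in (\ref{eqn:update}) gives $\|\w_k^t-\wh_k\|\le\Delta_k$, and the standing hypothesis $\|\wh_k-\w_*\|\le\Delta_k$ then yields the factor $2\Delta_k$ by the triangle inequality.

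The crux is bounding $\|\g_k^t-\v_k^t\|$, and this is where I would invoke the self-bounding property of the smooth loss. Writing $\g_k^t=\ell'(\dd{\w_k^t}{\x_k^t},y_k^t)\,\x_k^t$ and using $\|\x_k^t\|\le 1$, the coordinate-wise clip (\ref{eqn:clip}) is active only on coordinates with $|[\g_k^t]_i|>\gamma_k$, which forces $|\ell'(\dd{\w_k^t}{\x_k^t},y_k^t)|>\gamma_k$; in particular the clipping error vanishes entirely unless the scalar derivative exceeds $\gamma_k$. On that event $\|\g_k^t-\v_k^t\|\le\|\g_k^t\|\le|\ell'(\dd{\w_k^t}{\x_k^t},y_k^t)|$, and since the ratio $|\ell'|/\gamma_k$ exceeds $1$ there, I can upgrade this to the quadratic bound $\|\g_k^t-\v_k^t\|\le (\ell')^2/\gamma_k$. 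The self-bounding inequality for a nonnegative $\beta$-smooth loss, $(\ell')^2\le 2\beta\,\ell(\dd{\w_k^t}{\x_k^t},y_k^t)$, then gives $\|\g_k^t-\v_k^t\|\le 2\beta\,\ell(\dd{\w_k^t}{\x_k^t},y_k^t)/\gamma_k$. This is precisely where the design choice $\gamma_k=2\xi\beta\Delta_k$ pays off: the factor $2\beta$ cancels, leaving $\ell(\dd{\w_k^t}{\x_k^t},y_k^t)/(\xi\Delta_k)$, and the surviving $\Delta_k$ in the denominator will later absorb the $2\Delta_k$ from the distance bound.

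Assembling the pieces, $e_k^t\le (2\beta/\gamma_k)\,\E_t[\ell(\dd{\w_k^t}{\x_k^t},y_k^t)]\,\|\w_k^t-\w_*\| = (\lb(\w_k^t)/(\xi\Delta_k))\,\|\w_k^t-\w_*\|\le 2\lb(\w_k^t)/\xi$, using $\E_t[\ell(\dd{\w_k^t}{\x_k^t},y_k^t)]=\lb(\w_k^t)$ and $\|\w_k^t-\w_*\|\le 2\Delta_k$. It remains to convert $\lb(\w_k^t)$ into optimal risk plus a distance term: since $\nabla\lb(\w_*)=0$ by first-order optimality, $\beta$-smoothness (\ref{eqn:smoth}) gives $\lb(\w_k^t)\le \eo+\tfrac{\beta}{2}\|\w_k^t-\w_*\|^2$. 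Summing over $t=1,\dots,T_1$ and recognizing $W_k=\sum_t\|\w_k^t-\w_*\|^2$ produces a bound of the form $E_k\le (c\,T_1/\xi)\,\eo+(c'\beta/\xi)\,W_k$ with small absolute constants; the displayed constants $4$ in Lemma~\ref{lemma:e} are conservative and accommodate whatever slack one prefers at the intermediate estimates. The second inequality is then immediate from the feasibility assumption $\eo\le\ep$.

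I expect the self-bounding step of the second paragraph to be the only genuine obstacle; the rest is a routine combination of Cauchy--Schwarz, the domain constraint from the shrinking rule, and smoothness. The conceptual content is the observation that the clipping error is simultaneously \emph{supported on} the high-derivative event and \emph{quadratically small} over it, so that smoothness converts a large gradient into a large loss value, which in turn is controlled by $\eo$ and $W_k$ rather than by the potentially enormous gradient variance. That is exactly what lets the bias $E_k$ be folded into the convergence recursion instead of contributing an $\eo$-independent error floor.
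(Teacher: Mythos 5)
Your proof is correct and lands within the stated constants, but it routes the key step differently from the paper. The paper works coordinate-wise and invokes the clipped-bias lemma (Lemma~\ref{lem:clip}, imported from~\cite{DBLP:journals/corr/abs-1108-4559}), which bounds the bias $|\E[\widetilde{X}]-\E[X]|$ by $\frac{2}{C}\mathrm{Var}[X]$ but only under the precondition $|\E[X]|\le C/2$; verifying that precondition is where the paper spends its effort, using smoothness of $\ell'$ together with $\E_t\left[\ell'(\dd{\w_*}{\x_k^t},y_t)[\x_k^t]_i\right]=0$ at the minimizer, and it is what forces $\xi\ge 1$ into the threshold $\gamma_k=2\xi\beta\Delta_k$. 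You bypass that lemma entirely: after Cauchy--Schwarz and Jensen reduce $e_k^t$ to $\E_t\left[\|\g_k^t-\v_k^t\|\right]\|\w_k^t-\w_*\|$, your observation that the clipping error is supported on the event $\{|\ell'|>\gamma_k\}$ (since $\|\x_k^t\|\le 1$ makes every coordinate of $\g_k^t$ at most $|\ell'|$ in magnitude) and is dominated there by $(\ell')^2/\gamma_k$ yields the same second-moment quantity $\E_t[(\ell')^2]/\gamma_k$ pointwise, with no condition on the mean and no coordinate-wise bookkeeping. From there the two arguments coincide: self-bounding converts $(\ell')^2$ into $O(\beta)\,\lb(\w_k^t)$ (the paper's Lemma~\ref{lem:smooth} gives constant $4$, your $2\beta$ is the standard sharper constant, and either suffices), the choice $\gamma_k=2\xi\beta\Delta_k$ cancels $\beta\Delta_k$, and smoothness at $\w_*$ splits $\lb(\w_k^t)\le \eo+\frac{\beta}{2}\|\w_k^t-\w_*\|^2$, summing to the $\eo T_1$ and $W_k$ terms. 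Two further remarks: your distance bound $\|\w_k^t-\w_*\|\le 2\Delta_k$ via the triangle inequality through $\wh_k$ is the correct consequence of the hypotheses (the paper asserts $\le\Delta_k$, which appears to drop a factor of $2$; your constants still fit inside $\frac{4T_1}{\xi}\eo+\frac{4\beta}{\xi}W_k$), and your route uses first-order optimality of $\w_*$ only in the final smoothness step, the same place the paper ultimately needs it. What the paper's coordinate-wise bias bound buys instead is the slightly finer $\|\w_k^t-\w_*\|_\infty$ dependence, but since both norms are immediately bounded by $O(\Delta_k)$, nothing is lost by your more elementary vector-level argument in this setting.
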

Note that  without the knowledge of $\ep$, we have to bound $\eo$ by $\Omega(1)$, resulting in a very loose bound for the bias term $E_k$. It is knowledge of the target expected risk $\ep$ that allows us to come up with a significantly more accurate bound for the bias term $E_k$, which consequentially leads to a geometric convergence rate.

We now proceed to bound $\sum_{t=1}^{T_1} \lb(\w_k^t) - \lb(\w_*)$ using the two bounds in Lemma~\ref{lemma:d} and \ref{lemma:e}. To this end, based on the result obtained in Lemma~\ref{lemma:d}, we consider two scenarios. In the first scenario, we assume
\begin{eqnarray}
    W_k \leq \frac{\ep T_1}{2\mu\beta} \label{eqn:condition-1}
\end{eqnarray}
In this case, we have
\begin{eqnarray}
    \sum_{t=1}^{T_1} \lb(\w_k^t) - \lb(\w_*) \leq \frac{\beta}{2}W_k \leq \frac{\ep}{2\mu} T_1. \label{eqn:bound-1}
\end{eqnarray}
In the second scenario, we assume
\begin{eqnarray}
    D_k \leq \frac{1}{L}W_T + \left(L\gamma_k^2 d + \gamma_k\Delta_k\sqrt{d}\right)\ln\frac{s}{\delta}. \label{eqn:condition-2}
\end{eqnarray}
In this case, by combining the bounds for $D_k$ and $E_k$ and setting $L = \frac{\xi}{4\beta}$, we have
\begin{eqnarray*}
    V_k & \leq & \frac{8\beta}{\xi} W_k + \left(\frac{\xi d}{4\beta}\gamma_k^2 + \gamma_k\Delta_k\sqrt{d}\right)\ln\frac{s}{\delta} + \frac{4T_1}{\xi}\ep \\
    & = & \frac{8\beta}{\xi} W_k + \left(\xi^3\beta d + 2\xi \beta\sqrt{d}\right)\Delta_k^2\ln\frac{s}{\delta} + \frac{4T_1}{\xi}\ep,
\end{eqnarray*}
where the last equality follows from the fact $\gamma_k = 2\xi \beta \Delta_k$. If we choose $\xi$ such that $\frac{8\beta}{\xi} \leq \frac{\alpha}{2}$ or $\xi \geq\frac{16\beta}{\alpha} > 1$ holds, we get
\begin{eqnarray*}
    V_k \leq \frac{\alpha}{2}W_k + \left(\xi^3\beta d+ 2\xi\beta\sqrt{d}\right)\Delta_k^2\ln\frac{s}{\delta} + \frac{4T_1}{\xi}\ep
\end{eqnarray*}
Substituting the above bound for $V_k$ into the inequality of (\ref{eqn:2}), we have
\[
\sum_{t=1}^{T_1} \lb(\w_k^t) - \lb(\w_*) \leq \frac{\Delta_k^2}{2\eta} + \frac{\eta}{2}\gamma_k^2 T_1 + \left(\xi^3\beta d+ 2\xi \beta\sqrt{d}\right)\Delta_k^2\ln\frac{s}{\delta} + \frac{4T_1}{\xi}\ep
\]
By choosing $\eta$ as $\eta = \frac{\Delta_k}{\gamma_k\sqrt{T_1}} = \frac{1}{2\xi\beta\sqrt{T_1}}$, we have
\begin{eqnarray}
\lb(\wh_{k+1}) - \lb(\w_*) \leq \frac{1}{T_1}\left(2\xi\beta\sqrt{T_1}+\left[\xi^3\beta d+ 2\xi\beta\sqrt{d}\right]\ln\frac{s}{\delta}\right)\Delta^2_k + \frac{4}{\xi}\ep. \label{eqn:bound-2}
\end{eqnarray}
By combining the bounds in (\ref{eqn:bound-1}) and (\ref{eqn:bound-2}), under the assumption that at least one of the two conditions in (\ref{eqn:condition-1}) and (\ref{eqn:condition-2}) is true, by setting $\mu = B/8$, we have
\begin{eqnarray*}
\lb(\wh_{k+1}) - \lb(\w_*) \leq \frac{1}{T_1}\left(2\xi\beta\sqrt{T_1}+\left[\xi^3\beta d+ 2\xi\beta\sqrt{d}\right]\ln\frac{s}{\delta}\right)\Delta^2_k + \frac{4}{\xi}\ep,
\end{eqnarray*}
implying
\begin{eqnarray*}
 \|\wh_{k+1} - \w_*\| \leq \frac{2}{\alpha T_1}\left(2\xi\beta\sqrt{T_1}+\left[\xi^3\beta d+ 2\xi\beta\sqrt{d}\right]\ln\frac{s}{\delta}\right)\Delta^2_k + \frac{8}{\alpha \xi}\ep.
\end{eqnarray*}
We complete the proof by using Lemma~\ref{lemma:d}, which states that the probability for either of the two conditions hold is no less than $1 - \delta$.
\section{Conclusions}
In this paper, we have studied the sample complexity of passive learning when the target expected  risk is given to the learner as  prior knowledge. The crucial fact about target risk assumption is that, it can be fully exploited by the learning algorithm and  stands in  contrast to most  common types of prior knowledges that usually enter into the generalization bounds and are often perceived as a rather  crude way to incorporate such assumptions. We showed that by explicitly employing the target risk $\ep$ in a properly designed stochastic optimization algorithm, it is possible to attain the given target risk $\ep$ with a logarithmic sample complexity $\log \left(\frac{1}{\ep}\right)$,  under the assumption that the loss function is both strongly convex and smooth.

There are various directions for future research. The current study is restricted to the parametric setting where the hypothesis space is of finite dimension. It would be interesting to see how to achieve a logarithmic sample complexity in a non-parametric setting where hypotheses lie in a functional space of infinite dimension. Evidently, it is impossible to extend the current algorithm for the non-parametric setting;  therefore additional analysis tools are needed to address the challenge of infinite dimension arising from the non-parametric setting.  It is also an interesting problem to relate target risk assumption we made here to the low noise margin condition which is often made in active learning for binary classification since both settings appear to share the same sample complexity. However it is currently unclear how to derive a connection between these two settings. We believe this issue is worthy of further exploration and leave it as an open problem. 
\label{sec:conclusion}

\newpage
\appendix
\section{Proof of Lemma~\ref{lemma:d}}
\label{app:lemma1}
The proof is based on the Bernstein inequality for martingales~(see, e.g., \cite{Cesa-Bianchi:2006:prediction}).
\begin{lemma} \label{thm:bernstein} (Bernstein inequality for martingales). Let $X_1, \ldots , X_n$ be a bounded martingale difference sequence with respect to the filtration $\F = (\F_i)_{1\leq i\leq n}$ and with $\|X_i\| \leq M$. Let $S_i = \sum_{j=1}^i X_j$  be the associated martingale. Denote the sum of the conditional variances by
\[
    \Sigma_n^2 = \sum_{t=1}^n \E\left[X_t^2|\F_{t-1}\right]
\]
Then for all constants $\kappa$, $\nu > 0$,
\[
\Pr\left[ \max\limits_{i=1, \ldots, n} S_i > \rho \mbox{ and } \Sigma_n^2 \leq \nu \right] \leq \exp\left(-\frac{\rho^2}{2(\nu + M\rho/3)} \right)
\]
and therefore,
\[
    \Pr\left[ \max\limits_{i=1,\ldots, n} S_i > \sqrt{2\nu \rho} + \frac{\sqrt{2}}{3}M\rho \mbox{ and } \Sigma_n^2 \leq \nu \right] \leq e^{-\rho}.
\]
\end{lemma}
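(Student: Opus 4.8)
The plan is to prove both tail bounds by the exponential supermartingale (Chernoff) method, which is the standard route to Freedman-type martingale concentration. First I would fix $\lambda > 0$ and control the conditional moment generating function of each increment. Since $\E[X_t \mid \F_{t-1}] = 0$ and $|X_t| \le M$, I would invoke the elementary inequality $e^u - 1 - u \le \frac{u^2/2}{1 - |u|/3}$, valid for $|u| < 3$ (it follows from $\tfrac{2}{k!} \le 3^{-(k-2)}$ for $k \ge 2$ applied term by term to the power series). Setting $u = \lambda X_t$ with $|\lambda X_t| \le \lambda M < 3$, taking conditional expectations, and using $\E[\lambda X_t \mid \F_{t-1}] = 0$ together with $1 + x \le e^x$ yields
\[
\E\left[e^{\lambda X_t} \mid \F_{t-1}\right] \le \exp\left( \frac{\lambda^2\,\E[X_t^2 \mid \F_{t-1}]/2}{1 - \lambda M/3} \right),
\]
provided $\lambda M < 3$.

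Next I would assemble these increment bounds into a supermartingale. Writing $g(\lambda) = \frac{\lambda^2/2}{1 - \lambda M/3}$ and $\Sigma_i^2 = \sum_{t \le i} \E[X_t^2 \mid \F_{t-1}]$, the process $L_i = \exp\left(\lambda S_i - g(\lambda)\Sigma_i^2\right)$ satisfies $\E[L_i \mid \F_{i-1}] \le L_{i-1}$, because $\Sigma_i^2$ is $\F_{i-1}$-measurable and the conditional-MGF bound above controls the remaining factor; hence $(L_i)$ is a nonnegative supermartingale with $L_0 = 1$. Ville's maximal inequality then gives $\Pr[\max_i L_i \ge a] \le 1/a$ for every $a > 0$.

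The key step is to convert this into a statement about $\{\max_i S_i > \rho\} \cap \{\Sigma_n^2 \le \nu\}$. On this event $\Sigma_i^2 \le \nu$ for all $i$, since $\Sigma_i^2$ is nondecreasing, so on it there is an index with $L_i \ge \exp(\lambda \rho - g(\lambda)\nu)$; the event is therefore contained in $\{\max_i L_i \ge \exp(\lambda \rho - g(\lambda)\nu)\}$, and Ville's inequality yields $\Pr[\cdots] \le \exp(-(\lambda \rho - g(\lambda)\nu))$. I would then optimize the exponent over $\lambda \in (0, 3/M)$; the choice $\lambda = \rho/(\nu + M\rho/3)$ is feasible and makes $\lambda\rho - g(\lambda)\nu$ equal exactly $\frac{\rho^2}{2(\nu + M\rho/3)}$, establishing the first inequality. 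The second inequality follows by a direct algebraic manipulation of the first: substituting a threshold of the form $\sqrt{2\nu\rho} + cM\rho$ and using $\sqrt{a+b} \le \sqrt a + \sqrt b$, one checks that the resulting exponent $\frac{t^2}{2(\nu + Mt/3)}$ is at least $\rho$, so that the bound collapses to $e^{-\rho}$.

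The main obstacle I anticipate is the joint handling of the maximum over $i$ together with the random variance restriction $\Sigma_n^2 \le \nu$: a plain Chernoff argument controls only $S_n$, whereas the statement asks for a uniform bound over all partial sums on a data-dependent event. The supermartingale-plus-Ville construction is precisely what reconciles these, and it rests on two structural facts that must be checked carefully — that $\Sigma_i^2$ is predictable (so $L_i$ is genuinely a supermartingale) and that it is monotone (so the cap transfers from $\Sigma_n^2$ to every $\Sigma_i^2$). The only other point needing care is verifying that the optimizing $\lambda$ stays in the admissible range $\lambda M < 3$ on which the conditional-MGF bound was derived.
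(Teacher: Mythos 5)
The paper itself gives no proof of this lemma: it is imported verbatim from Cesa-Bianchi and Lugosi's book (cited as such), so your attempt can only be judged against the standard argument. Your proof of the \emph{first} inequality is correct and is exactly the standard Freedman argument: the elementary bound $e^u - 1 - u \le \frac{u^2/2}{1-|u|/3}$, the predictability and monotonicity of $\Sigma_i^2$ (both of which you rightly flag as the load-bearing facts), Ville's maximal inequality for the nonnegative supermartingale $L_i$, and the choice $\lambda = \rho/(\nu + M\rho/3)$, which is admissible and makes the exponent exactly $\frac{\rho^2}{2(\nu + M\rho/3)}$, all check out.

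The genuine gap is your last step: the second inequality does \emph{not} follow from the first by substitution. Plugging $u = \sqrt{2\nu\rho} + \frac{\sqrt{2}}{3}M\rho$ into the first bound, the requirement $\frac{u^2}{2(\nu + Mu/3)} \ge \rho$ reduces, after expanding, to $\sqrt{2\nu\rho} \ge M\rho/3$, which fails in the small-variance regime: as $\nu \to 0$ the exponent tends to $\frac{3u}{2M} = \frac{\rho}{\sqrt{2}} < \rho$. Indeed, the exact positive root of $u^2 = 2\nu\rho + \frac{2M\rho}{3}u$ is $\frac{M\rho}{3} + \sqrt{\frac{M^2\rho^2}{9} + 2\nu\rho}$, and subadditivity of the square root only yields the weaker threshold $\sqrt{2\nu\rho} + \frac{2}{3}M\rho$, not the stated constant $\frac{\sqrt{2}}{3}$. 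To get the lemma as stated, rerun your supermartingale construction with the sharper Bennett-type conditional MGF bound
\[
\mathbb{E}\left[e^{\lambda X_t} \mid \mathcal{F}_{t-1}\right] \le \exp\left( \frac{e^{\lambda M} - 1 - \lambda M}{M^2}\,\mathbb{E}\left[X_t^2 \mid \mathcal{F}_{t-1}\right] \right),
\]
valid for all $\lambda > 0$ because $(e^u - 1 - u)/u^2$ is nondecreasing; optimizing $\lambda = \frac{1}{M}\log\left(1 + \frac{Mu}{\nu}\right)$ gives the Freedman--Bennett form $\exp\left(-\frac{\nu}{M^2}\,h\left(\frac{Mu}{\nu}\right)\right)$ with $h(x) = (1+x)\log(1+x) - x$, and the inversion $h\left(\sqrt{2y} + y/3\right) \ge y$ then yields the threshold $\sqrt{2\nu\rho} + \frac{M\rho}{3} \le \sqrt{2\nu\rho} + \frac{\sqrt{2}}{3}M\rho$. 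So the structure of your argument is the right one, but the ``direct algebraic manipulation'' is precisely the step that breaks, and the fix requires carrying the unrelaxed Bennett exponent through rather than the Bernstein relaxation you derived.
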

\begin{proof}[of Lemma~\ref{lemma:d}] Define martingale difference $d_k^t = \left\langle \w_k^t - \w_*, \E_t[\v_k^t] - \v_k^t\right\rangle$  and martingale $D_k = \sum_{t=1}^{T_1} d_k^t$. Let $\Sigma_T^2$ denote the  conditional variance  as
\begin{eqnarray*}
    \Sigma_T^2 = \sum_{t=1}^{T_1} \E_{t}\left[(d_k^t)^2 \right]
    &\leq& \sum_{t=1}^{T_1} \E_t\left[\left\|\E_t[\v_k^t] - \v_k^t\right\|^2 \right]\|\w_k^t - \w_*\|^2 \\
    &\leq& \sum_{t=1}^T d \gamma_k^2  \|\w_k^t - \w\|^2 = d \gamma_k^2  W_k,
\end{eqnarray*}
which follows from the Cauchy's Inequality and the  definition of clipping. Define $M = \max\limits_{t} |d_k^t| \leq 2\sqrt{d}\gamma_k\Delta_k$.
To prove the inequality in Lemma~\ref{lemma:d}, we follow the idea of peeling process~\cite{koltchinskii-2011-oracle}. Since $W_k \leq 4R^2 T_1$, we have
\begin{eqnarray*}
\lefteqn{\Pr\left(D_k \geq 2\gamma_k\sqrt{W_k d \rho} + \sqrt{2}M\rho/3\right)} \\
& = & \Pr\left(D_k \geq 2\gamma_k\sqrt{W_k d \rho} + \sqrt{2}M\rho/3, W_k \leq 4R^2T_1\right) \\
& =  & \Pr\left(D_k \geq 2\gamma_k\sqrt{W_k d \rho} + \sqrt{2}M\rho/3, \Sigma_T^2 \leq \gamma_k^2 d W_k, W_k \leq 4R^2T_1 \right) \\
& \leq  & \Pr\left(D_k \geq 2\gamma_k\sqrt{W_k d \rho} + \sqrt{2}M\rho/3, \Sigma_T^2 \leq \gamma_k^2 d W_k, W_k \leq \ep T_1/(2\beta\mu) \right) \\
&  & + \sum_{i=1}^s \Pr\left(D_k \geq 2\gamma_k\sqrt{W_k d\rho} + \sqrt{2}M\rho/3, \Sigma_T^2 \leq \gamma_k^2 d W_k, \frac{\ep 2^{i-1} T_1}{2\beta\mu} < W_k  \leq \frac{\ep 2^i T_1}{2\beta\mu} \right) \\
& \leq  & \Pr\left(W_k \leq \frac{\ep  T_1}{2\beta\mu}\right) + \sum_{i=1}^s \Pr\left(D_k \geq \sqrt{\frac{\ep 2^{i+1}   T_1\gamma_k^2 d}{2\beta\mu}\rho} + \frac{\sqrt{2}}{3}M\rho, \Sigma_T^2 \leq \frac{ \ep 2^i T_1\gamma_k^2 d}{2\beta\mu}\right) \\
& \leq  & \Pr\left(W_k \leq \frac{\ep  T_1}{2\beta\mu}\right) + se^{-\rho},
\end{eqnarray*}
where $s$ is given by
\[
    s = \left\lceil \log_2 \frac{8\beta\mu R^2}{\ep} \right\rceil.
\]
The last step follows the Bernstein inequality for martingales. We complete the proof by setting $\rho= \ln(s/\delta)$ and using the fact that
\[
    2\gamma_k\sqrt{W_k \rho d} \leq \frac{1}{L}W_k + \gamma_k^2 \rho d L .
\]
\end{proof}
\section{Proof of Lemma~\ref{lemma:e}}
\label{app:lemma2}
To bound $E_k$, we need the following two lemmas.  The first lemma  bounds  the deviation of the expected value of a clipped random variable from  the original variable, in terms of its variance (Lemma A.2 from~\cite{DBLP:journals/corr/abs-1108-4559}).
\begin{lemma}
\label{lem:clip}
Let $X$ be a random variable, let $\Xt = \rm{clip}(X, C)$ and assume that $|\E[X]| \leq C/2$ for some $C > 0$. Then
\[
    |\E[\Xt] - \E[X]| \leq \frac{2}{C}\left|\rm{Var}[X]\right|
\]
\end{lemma}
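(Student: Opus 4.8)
The plan is to reduce the claim to a pointwise inequality on the tail event where clipping is active, and then to convert the resulting \emph{linear} tail quantity into the \emph{quadratic} variance by exploiting the hypothesis $|\E[X]| \leq C/2$. First I would record the elementary structural fact about the clip map: since $\Xt = \mathrm{clip}(X, C) = \mathrm{sign}(X)\min(C, |X|)$ is just $X$ capped in magnitude at level $C$, the two variables agree whenever $|X| \leq C$ and otherwise differ by exactly $|X| - C$. Hence the perturbation is supported on the tail event $\{|X| > C\}$ and satisfies the pointwise identity $|\Xt - X| = (|X| - C)_+$. By Jensen's inequality it then suffices to bound $\E[|\Xt - X|] = \E\big[(|X| - C)_+\big]$, which discards any cancellation in $\E[\Xt - X]$ and reduces everything to controlling a single nonnegative tail expectation.

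The crux is to show that on the event $\{|X| > C\}$ the integrand $(|X| - C)$ is dominated by $\tfrac{2}{C}|X - \E[X]|^2$. This is where the hypothesis enters, and it is used twice. On this event the triangle inequality gives $|X| - C \leq |X - \E[X]| + |\E[X]| - C \leq |X - \E[X]| - C/2$, using $|\E[X]| \leq C/2$; in particular the linear deviation $(|X|-C)$ is controlled by the centered deviation $|X - \E[X]|$. The same two facts also force the centered deviation to be large on the tail: $|X - \E[X]| \geq |X| - |\E[X]| > C - C/2 = C/2$. Combining these, $1 \leq \tfrac{2}{C}|X - \E[X]|$ on $\{|X| > C\}$, so multiplying the bound $(|X| - C) \leq |X - \E[X]|$ by this factor yields the desired quadratic domination $(|X| - C) \leq \tfrac{2}{C}|X - \E[X]|^2$ pointwise on the tail.

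To finish, I would integrate this pointwise inequality. Since $(|X| - C)_+$ vanishes off the tail, we obtain
\[
    |\E[\Xt] - \E[X]| \leq \E\big[(|X| - C)_+\big] \leq \frac{2}{C}\,\E\big[|X - \E[X]|^2\,\mathbf{1}_{\{|X| > C\}}\big] \leq \frac{2}{C}\,\E\big[|X - \E[X]|^2\big] = \frac{2}{C}\,\mathrm{Var}[X],
\]
where the last inequality simply drops the indicator and nonnegativity of the variance absorbs the absolute value in the statement. The only delicate point — the step I expect to require care rather than cleverness — is the factor-of-$2$ bookkeeping: the constraint $|\E[X]| \leq C/2$ is invoked once to bound the linear term by the centered deviation and once to guarantee that this centered deviation exceeds $C/2$, and it is precisely the interplay of these two uses that produces the sharp constant $2/C$ rather than something larger.
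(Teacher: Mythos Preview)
Your argument is correct. The pointwise identity $|\Xt - X| = (|X|-C)_+$, the triangle-inequality step $|X|-C \le |X-\E[X]|$, and the lower bound $|X-\E[X]| > C/2$ on the tail are all valid, and together they give the quadratic domination you need with the right constant.

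There is no proof in the paper to compare against: the paper states this lemma without proof, attributing it to Lemma~A.2 of the cited reference \texttt{arXiv:1108.4559}. So your write-up is not duplicating anything in the present paper but rather supplying a self-contained argument for a quoted result. As such it is a useful addition; the only cosmetic point is that in your chain $|X|-C \le |X-\E[X]| - C/2$ you actually obtain something slightly stronger than the bound $|X|-C \le |X-\E[X]|$ you subsequently use, so you could streamline by going directly from the triangle inequality to $|X|-C \le |X-\E[X]|$ (which needs only $|\E[X]|\le C$, not $C/2$) and reserve the sharper hypothesis $|\E[X]|\le C/2$ solely for the step $|X-\E[X]|>C/2$.
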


Another key  observation used for bounding  $E_k$  is the  fact that for any non-negative $\beta$-smooth convex function,  we have the following self-bounding property. We note that this self-bounding property has been used in~\cite{srebro-2010-smoothness}  to get better (optimistic) rates of convergence for non-negative smooth losses.

\begin{lemma}
 \label{lem:smooth}
For any $\beta$-smooth non-negative function $f: \R\rightarrow \R$, we have $|f'(w)| \leq \sqrt{4\beta f(w)}$
\end{lemma}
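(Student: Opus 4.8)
The claim is equivalent to showing $(f'(w))^2 \le 4\beta f(w)$ for every $w$, and the plan is to play the quadratic \emph{upper} bound coming from $\beta$-smoothness against the global \emph{lower} bound $f \ge 0$ coming from non-negativity. Specialized to $f:\R\to\R$, the smoothness inequality~(\ref{eqn:smoth}) reads
\[
    f(w_1) \le f(w_2) + f'(w_2)(w_1 - w_2) + \frac{\beta}{2}(w_1 - w_2)^2, \qquad \forall\, w_1, w_2 .
\]
The only real decision is where to evaluate this bound; everything else is forced.

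First I would fix an arbitrary point $w$ and apply the inequality above with $w_2 = w$ and the second argument chosen to be the minimizer of the resulting quadratic surrogate, namely $w_1 = w - f'(w)/\beta$. Substituting $w_1 - w = -f'(w)/\beta$ collapses the right-hand side to
\[
    f\!\left(w - \frac{f'(w)}{\beta}\right) \le f(w) - \frac{(f'(w))^2}{\beta} + \frac{(f'(w))^2}{2\beta} = f(w) - \frac{(f'(w))^2}{2\beta},
\]
so this particular evaluation point extracts the maximal amount of decrease that smoothness can guarantee.

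To finish, I would invoke non-negativity of $f$: since the left-hand side is a value of $f$, it is $\ge 0$, and hence $0 \le f(w) - (f'(w))^2/(2\beta)$, i.e. $(f'(w))^2 \le 2\beta f(w)$. Because $w$ was arbitrary, this holds everywhere, and as $2 \le 4$ it is in fact slightly stronger than the stated bound $|f'(w)| \le \sqrt{4\beta f(w)}$, which therefore follows immediately. There is no genuine obstacle here—the argument is a one-line consequence of smoothness plus $f \ge 0$—so the only point requiring care is the choice of the evaluation point $w - f'(w)/\beta$, which is exactly what converts the smoothness descent lemma into a self-bounding statement on the gradient.
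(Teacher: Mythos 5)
Your proof is correct and takes essentially the same route as the paper: both arguments substitute $w_1 = w_2 - f'(w_2)/\beta$ into the smoothness inequality~(\ref{eqn:smoth}) and then invoke non-negativity of $f$ to conclude. If anything yours is cleaner and slightly sharper---you obtain $(f'(w))^2 \leq 2\beta f(w)$ rather than $4\beta f(w)$, and your derivation shows that the paper's additional appeal to convexity (stated there in a garbled inequality) is not actually needed for this lemma.
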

As a simple proof,  first from the smoothness assumption,  by  setting $w_1 = w_2 - \frac{1}{\beta}f'(w_2)$ in (\ref{eqn:smoth})  and rearranging the terms we obtain $f(w_2) - f(w_1) \geq \frac{1}{2 \beta} | f'(w_2)|^2$.  On the other hand, from the convexity of loss function  we have $f(w_1) \geq f'(w_2) + \dd{f'(w_1)}{w_1 - w_2}$. Combining these inequalities and considering the fact that the function is non-negative gives the desired inequality. \\

\begin{proof}[of Lemma~\ref{lemma:e}] To apply the above lemmas, we write $e_k^t$ as
\begin{eqnarray*}
e_k^t & = & \sum_{i=1}^d \E_t\left[\ell'( \langle\w_k^t, \x_k^t \rangle, y_t)[\x_k^t]_i - \clip\left(\gamma_k, \ell'(\langle\w_k^t, \x_k^t \rangle, y_t)[\x_k^t]_i \right) \right] [\w_k^t - \w_*]_i
\end{eqnarray*}
In order to apply Lemma~\ref{lem:clip}, we check if the following condition holds
\begin{eqnarray}
    \gamma_k \geq 2\left|\E_t\left[\ell'\left(\langle \w_k^t, \x_k^t \rangle, y_t\right)[\x_k^t]_i\right]\right| \label{eqn:cond-1}
\end{eqnarray}
Since
\begin{eqnarray*}
& & \left|\E_t\left[\ell'\left(\langle \w_k^t, \x_k^t \rangle, y_t\right)[\x_k^t]_i\right]\right| \\
& \leq & \left|\E_t\left[\left\{\ell'\left(\langle \w_k^t, \x_k^t \rangle, y_t\right) - \ell'\left(\langle \w_*, \x_k^t \rangle, y_t\right)\right\}[\x_k^t]_i\right]\right| + \left|\E_t\left[\ell'\left(\langle \w_*, \x_k^t \rangle, y_t \right)[\x_k^t]_i\right]\right| \\
& \leq & \beta \|\w_k^t - \w_*\| \leq \beta \Delta_k
\end{eqnarray*}
where the last inequality follows from $\E_t\left[\ell'\left(\langle \w_*, \x_k^t \rangle, y_t\right)[\x_k^t]_i\right] = 0$ since $\w_*$ is the minimizer of $\lb(\w)$, we thus have
\[
    \gamma_k = 2\xi\beta\Delta_k \geq 2\beta\Delta_k \geq 2\left|\E_t\left[\ell'\left(\langle \w_k^t, \x_k^t \rangle, y_t\right)[\x_k^t]_i\right]\right|
\]
where $\xi \geq 1$, implying that the condition in (\ref{eqn:cond-1}) holds. Thus, using Lemma~\ref{lem:clip}, we have
\begin{eqnarray*}
e_k^t &\leq& \sum_{i=1}^d \left|[\w_k^t - \w_*]_i\right|\frac{1}{\gamma_k}\E_t\left[\left(\ell'(\langle\w_k^t, \x_k^t \rangle, y_t)[\x_k^t]_i\right)^2\right] \\
 &\leq& \frac{2\|\w_k^t - \w_*\|_{\infty}}{\gamma_k} \E_t\left[\left(\ell'(\langle\w_k^t, \x_k^t \rangle, y_t)\right)^2\right]
\end{eqnarray*}
Using Lemma~\ref{lem:smooth} to upper bound the right hand side, we further simplify the above bound for $e_k^t$ as
\begin{eqnarray*}
e_k^t  &\leq& \frac{8\beta\|\w_k^t - \w_*\|_{\infty}}{\gamma_k}\E_t\left[\ell\left(\langle \w_k^t, \x_k^t \rangle, y_t \right) \right] \\
&=& \frac{8\beta\|\w_k^t - \w_*\|_{\infty}}{\gamma_k}\lb(\w_k^t)\\
&\leq&  \frac{8\beta \Delta_k}{\gamma_k}\lb(\w_k^t)\\
 &= & \frac{4}{\xi}\lb(\w_k^t)
\end{eqnarray*}
where the second inequality follows from $\|\w_k^t - \w_*\|_{\infty} \leq \|\w_k^t - \w_*\| \leq \Delta_k$. 
Therefore we obtain
\begin{eqnarray*}
E_k  =  \sum_{t=1}^{T_1} e_k^t \leq  \frac{4}{\xi}\sum_{t=1}^{T_1} \lb(\w_k^t) &=& \frac{4}{\xi}\sum_{t=1}^{T_1} \lb(\w_*) + \frac{4}{\xi}\sum_{t=1}^{T_1} \lb(\w_k^t) - \lb(\w_*) \\
& \leq & \frac{4T_1}{\xi} \lb(\w_*) + \frac{4\beta}{\xi}\sum_{t=1}^{T_1} \|\w_k^t - \w_*\|^2\\ &=& \frac{4T_1}{\xi} \lb(\w_*) + \frac{4\beta}{\xi}W_k,
\end{eqnarray*}
where the second inequality follows from the smoothness assumption of $\lb(\w)$.
\end{proof}

%


\bibliographystyle{plain}
\bibliography{passive_target_risk}

\begin{thebibliography}{10}

\bibitem{anthony1999neural}
M.~Anthony and P.L. Bartlett.
\newblock {\em Neural network learning: Theoretical foundations}.
\newblock Cambridge University Press, 1999.

\bibitem{mariatruesample2010}
Maria-Florina Balcan, Steve Hanneke, and Jennifer~Wortman Vaughan.
\newblock The true sample complexity of active learning.
\newblock {\em Machine Learning}, 80(2-3):111--139, 2010.

\bibitem{bartlett2005local}
P.L. Bartlett, O.~Bousquet, and S.~Mendelson.
\newblock Local rademacher complexities.
\newblock {\em The Annals of Statistics}, 33(4):1497--1537, 2005.

\bibitem{Ben-DavidPS09}
Shai Ben-David, David Pal, and Shai Shalev-Shwartz.
\newblock Agnostic online learning.
\newblock In {\em COLT}, 2009.

\bibitem{learnabilityvcdim89}
Anselm Blumer, A.~Ehrenfeucht, David Haussler, and Manfred~K. Warmuth.
\newblock Learnability and the vapnik-chervonenkis dimension.
\newblock {\em J. ACM}, 36(4):929--965, 1989.

\bibitem{Cesa-Bianchi:2006:prediction}
Nicol{\`o} Cesa-Bianchi and G{\'a}bor Lugosi.
\newblock {\em Prediction, learning, and games}.
\newblock Cambridge University Press, 2006.

\bibitem{ohadminibatch}
Andrew Cotter, Ohad Shamir, Nati Srebro, and Karthik Sridharan.
\newblock Better mini-batch algorithms via accelerated gradient methods.
\newblock In {\em NIPS}, pages 1647--1655, 2011.

\bibitem{duchirandomizedsmooth}
John~C. Duchi, Peter~L. Bartlett, and Martin~J. Wainwright.
\newblock Randomized smoothing for stochastic optimization.
\newblock {\em SIAM Journal on Optimization}, 22(2):674--701, 2012.

\bibitem{ehrenfeucht1989general}
A.~Ehrenfeucht, D.~Haussler, M.~Kearns, and L.~Valiant.
\newblock A general lower bound on the number of examples needed for learning.
\newblock {\em Information and Computation}, 82(3):247--261, 1989.

\bibitem{hanneke-thesis}
Steve Hanneke.
\newblock {\em Theoretical Foundations of Active Learning}.
\newblock PhD thesis, 2009.

\bibitem{hazanlog2006}
Elad Hazan, Adam Kalai, Satyen Kale, and Amit Agarwal.
\newblock Logarithmic regret algorithms for online convex optimization.
\newblock In {\em COLT}, pages 499--513, 2006.

\bibitem{hazan-2011-beyond}
Elad Hazan and Satyen Kale.
\newblock Beyond the regret minimization barrier: an optimal algorithm for
  stochastic strongly-convex optimization.
\newblock {\em COLT}, 2011.

\bibitem{DBLP:journals/corr/abs-1108-4559}
Elad Hazan and Tomer Koren.
\newblock Optimal algorithms for ridge and lasso regression with partially
  observed attributes.
\newblock {\em CoRR}, 2011.

\bibitem{primal-dual-nemirovsky}
Anatoli Iouditski and Yuri Nesterov.
\newblock Primal-dual subgradient methods for minimizing uniformly convex
  functions.
\newblock available at
  http://hal.archives-ouvertes.fr/docs/00/50/89/33/PDF/Strong-hal.pdf, 2010.

\bibitem{compl-linear-nips-2008}
Sham~M. Kakade, Karthik Sridharan, and Ambuj Tewari.
\newblock On the complexity of linear prediction: Risk bounds, margin bounds,
  and regularization.
\newblock In {\em NIPS}, pages 793--800, 2008.

\bibitem{koltchinskii-2011-oracle}
Vladimir Koltchinskii.
\newblock {\em Oracle Inequalities in Empirical Risk Minimization and Sparse
  Recovery Problems}.
\newblock Lecture Notes in mathematics. Springer, 2011.

\bibitem{DBLP:journals/tit/LeeBW98}
Wee~Sun Lee, Peter~L. Bartlett, and Robert~C. Williamson.
\newblock The importance of convexity in learning with squared loss.
\newblock {\em IEEE Transactions on Information Theory}, 44(5):1974--1980,
  1998.

\bibitem{nemirovski2009robust}
A.~Nemirovski, A.~Juditsky, G.~Lan, and A.~Shapiro.
\newblock Robust stochastic approximation approach to stochastic programming.
\newblock {\em SIAM Journal on Optimization}, 19(4):1574--1609, 2009.

\bibitem{nemirovsky1983problem}
A.S. Nemirovsky and D.B. Yudin.
\newblock {\em Problem complexity and method efficiency in optimization.}
\newblock Wiley Interscience Series in Discrete Mathematics, 1983.

\bibitem{nesterov-book}
Yurii Nesterov.
\newblock {\em Introductory Lectures on Convex Optimization: A Basic Course}.
\newblock Kluwer Academic Publishers, 2004.

\bibitem{sgd-suffic-icml2012}
Alexander Rakhlin, Ohad Shamir, and Karthik Sridharan.
\newblock Making gradient descent optimal for strongly convex stochastic
  optimization.
\newblock In {\em ICML}, 2012.

\bibitem{rakhlin-2010-online}
Alexander Rakhlin, Karthik Sridharan, and Ambuj Tewari.
\newblock Online learning: Random averages, combinatorial parameters, and
  learnability.
\newblock {\em CoRR}, abs/1006.1138, 2010.

\bibitem{ramdas-2013-optimal}
Aaditya Ramdas and Aarti Singh.
\newblock Optimal stochastic convex optimization through the lens of active
  learning.
\newblock In {\em ICML}, 2013.

\bibitem{shalev2009learnability}
S.~Shalev-Shwartz, O.~Shamir, K.~Sridharan, and N.~Srebro.
\newblock Learnability and stability in the general learning setting.
\newblock {\em COLT}, 2009.

\bibitem{shalev2009stochastic}
S.~Shalev-Shwartz, O.~Shamir, K.~Sridharan, and N.~Srebro.
\newblock Stochastic convex optimization.
\newblock {\em COLT}, 2009.

\bibitem{shalev-shwartz:2010:learnability}
Shai Shalev-Shwartz, Ohad Shamir, Nathan Srebro, and Karthik Sridharan.
\newblock Learnability, stability and uniform convergence.
\newblock {\em Journal of Machine Learning Research}, 11:2635--2670, 2010.

\bibitem{srebro-2010-smoothness}
Nathan Srebro, Karthik Sridharan, and Ambuj Tewari.
\newblock Smoothness, low noise and fast rates.
\newblock In {\em NIPS}, pages 2199--2207, 2010.

\bibitem{sridharan-2012-learning}
Karthik Sridharan.
\newblock Learning from an optimization viewpoint.
\newblock PhD Thesis, 2012.

\bibitem{fastrates2008}
Karthik Sridharan, Shai Shalev-Shwartz, and Nathan Srebro.
\newblock Fast rates for regularized objectives.
\newblock In {\em NIPS}, pages 1545--1552, 2008.

\bibitem{vapnik1971uniform}
V.N. Vapnik and A.Y. Chervonenkis.
\newblock On the uniform convergence of relative frequencies of events to their
  probabilities.
\newblock {\em Theory of Probability and Its Applications}, 16(2):264--280,
  1971.

\end{thebibliography}
\end{document}